\documentclass{article}
\usepackage{pgfplots}
\usepackage{styles/iclr2027_conference,times}
\usepackage{graphicx}
\usepackage{xcolor}            
\usepackage{amsmath}
\usepackage{amsfonts}
\usepackage{amssymb}
\usepackage{amsthm}
\usepackage{booktabs}
\usepackage{tcolorbox}
\usepackage{microtype}
\usepackage{algorithm}
\usepackage{algpseudocode}
\usepackage[hidelinks]{hyperref}
\usepackage[capitalise,noabbrev]{cleveref}
\usepackage{soul}
\pgfplotsset{compat=newest}
\usepackage{calc}
\usepackage{enumitem}
\usepgfplotslibrary{groupplots, fillbetween}

\newtheorem{theorem}{Theorem}[section]
\newtheorem{corollary}[theorem]{Corollary}
\newtheorem{proposition}[theorem]{Proposition}
\newtheorem{assumption}[theorem]{Assumption}
\theoremstyle{remark}
\newtheorem{remark}[theorem]{Remark}

\tcbset{verification/.style={colback=gray!10,colframe=black!60,title=Verification,
  fonttitle=\bfseries,arc=2pt,boxrule=0.5pt}}

\usepackage{tikz}
\usetikzlibrary{shapes,calc}
\usetikzlibrary{positioning, arrows.meta, backgrounds, fit, shapes.geometric}

\colorlet{netdraw}{blue!55!black}
\colorlet{netfill}{blue!8}
\colorlet{sigdraw}{black!65}
\colorlet{sigfill}{black!8}
\colorlet{hl}{orange!85!black}
\colorlet{acc}{teal!60!black}

\tikzset{
  net/.style={rectangle, rounded corners=2pt, draw=netdraw, fill=netfill,
              line width=0.9pt, minimum width=1.5cm, minimum height=1.3cm},
  sig/.style={rectangle, rounded corners=1pt, draw=sigdraw, fill=sigfill,
              line width=0.8pt, minimum width=0.34cm, minimum height=1.0cm},
  sigout/.style={sig, minimum height=0.7cm},
  lbl/.style={font=\scriptsize\sffamily, text=black!60},
  flow/.style={-{Stealth[length=2.4mm,width=1.8mm]}, line width=0.9pt, draw=black!70},
  hlnode/.style={draw=hl, fill=hl!12, line width=1.1pt},
  hlflow/.style={-{Stealth[length=2.2mm]}, line width=0.9pt, draw=hl, dashed,
                 dash pattern=on 3pt off 2pt},
  est/.style={rectangle, rounded corners=2pt, draw=acc, fill=acc!8,
              line width=0.8pt, font=\scriptsize, inner sep=4pt, align=center},
  samp/.style={circle, draw=acc, fill=acc!8, line width=1.1pt, minimum size=0.92cm},
  estflow/.style={-{Stealth[length=2mm,width=1.5mm]}, line width=0.75pt,
                  draw=acc, dash pattern=on 2.2pt off 1.6pt},
  brk/.style={draw=black!35, line width=0.7pt, dash pattern=on 1.6pt off 1.6pt},
}

\iclrfinalcopy 
\newcommand{\R}{\mathbb{R}}
\newcommand{\E}{\mathbb{E}}
\newcommand{\vecop}{\operatorname{vec}}
\newcommand{\Lcls}{\mathcal{L}_{\mathrm{cls}}}

\newcommand{\Lsmp}{L_{\mathrm{smp}}}
\newcommand{\Lcon}{L_{\mathrm{con}}}
\newcommand{\BW}{B_{W}}
\newcommand{\diag}{\operatorname{diag}}

\title{ProtoSeam: Lifting Classifier Training with Latent Gaussian Mixture Models}
\author{%
\parbox{\textwidth-2\tabcolsep}{\centering
Robert Lampel\,$^{1}$\thanks{Corresponding author: \texttt{robert.lampel@ovgu.de}}
\qquad Timon Klein\,$^{1}$
\qquad Sebastian Sager\,$^{1,2}$ \\[0.6em]
\mdseries
$^{1}$Department of Mathematics, Otto von Guericke University (Magdeburg, Germany) \\
$^{2}$Max Planck Institute for Dynamics of Complex Technical Systems (Magdeburg, Germany)
}}

\begin{document}
\maketitle

\begin{abstract}
We propose a lifted reformulation of supervised classification that
improves the final accuracy of standard classifiers without changing the
architecture at inference time.  A network $N=N_2\circ N_1$ is split at
a single semantic interface and one learnable prototype per class is
inserted there.  Training combines a quadratic consensus penalty that pulls
$N_1(x)$ toward the prototype of its class with a classification loss of $N_2$
evaluated on samples drawn around the prototypes, whereat no gradient crosses the
interface.  At inference the prototypes are discarded and the unmodified
network $N_2\circ N_1$ is used.  Across CIFAR-10, CIFAR-100, and TinyImageNet
with ResNet and vision transformer backbones, lifted training improves test accuracy by
up to five percentage points over variants without lifting under a shared tuning protocol.
Moreover, we provide theoretical justification of those results.
\end{abstract}

\section{Introduction}
\label{sec:intro}

Deep networks for supervised classification are usually trained end-to-end.
A single loss is backpropagated through the entire network $N$ in order to
maximize test accuracy. We instead view the network as a composition
$N = N_2 \circ N_1$, which introduces no additional parameters at inference time.
During end-to-end training, the two subnetworks are coupled in two ways.
First, by the chain rule, the gradient signal reaching the early subnetwork
$N_1$ must pass through the Jacobian of $N_2$. Whenever this Jacobian is
ill-conditioned or rank-deficient, this can lead to a distorted gradient signal
for $N_1$ is degraded.
Second, the classifier head $N_2$ is trained exclusively on the embeddings
$N_1(x)$ of the training samples. Nothing in the objective constrains its
behavior in a neighborhood of these embeddings, even though small
perturbations of the representation are precisely what the head encounters
as $N_1$ continues to evolve during training.

Prior work addressed the first coupling through
decoupled and local training, using per-layer auxiliary variables, local
error signals, or synthetic gradients (see \cref{sec:related}). However, these
methods target parallelism or reduced memory, and at best match the accuracy
of end-to-end training. Prototype- and margin-based losses, in contrast,
shape the embedding geometry but retain the end-to-end gradient path and
thus leave the second coupling in place.

Here we propose a lifted reformulation of classifier training
that removes both couplings at once, at a single, semantically meaningful
interface (the ``seam'').  The construction transfers Bock's direct multiple shooting method
\citet{bock1984multiple} from control
to classification. There, additional variables are introduced
together with matching conditions, leading to the same solution upon convergence.
Among other advantages, this formulation leads to better conditioning and convergence
gains~\citep{lampel2025liftings,lampel2026lifting}.
Here, we insert $n$ learnable lifted variables $s_1,\ldots,s_n$, one 
prototype per class, at the interface between $N_1$ and $N_2$ (\cref{fig:lifting}). Assuming
that the output of $N_1$ for class $i$ is normally distributed around $s_i$,
the training loss has two decoupled terms: 
\begin{enumerate}
	\item[(i)] a consensus penalty that drives $N_1(x)$ toward the prototype of its class, and
	\item[(ii)] a classification loss that trains $N_2$ on samples drawn from the class-conditional distribution around each prototype.
\end{enumerate}
An inter-class repulsion penalty keeps the prototypes
separated.  No gradient crosses the interface, and by discarding the lifted
variables and reconnecting both networks we recover the unmodified composition $N_2\circ N_1$.

The reformulation delivers improved final test accuracy over both a matched 
end-to-end control and a standard baseline across six
dataset-architecture pairs (CIFAR-10, CIFAR-100, and TinyImageNet, each
with a ResNet and a custom small vision transformer (ViT-S) backbone), under a shared hyperparameter-tuning protocol
(\cref{tab:main}).  We also account for the gain theoretically
in \cref{sec:theory}).

\begin{figure}[t]
    \centering
    \includegraphics[width=0.97\linewidth]{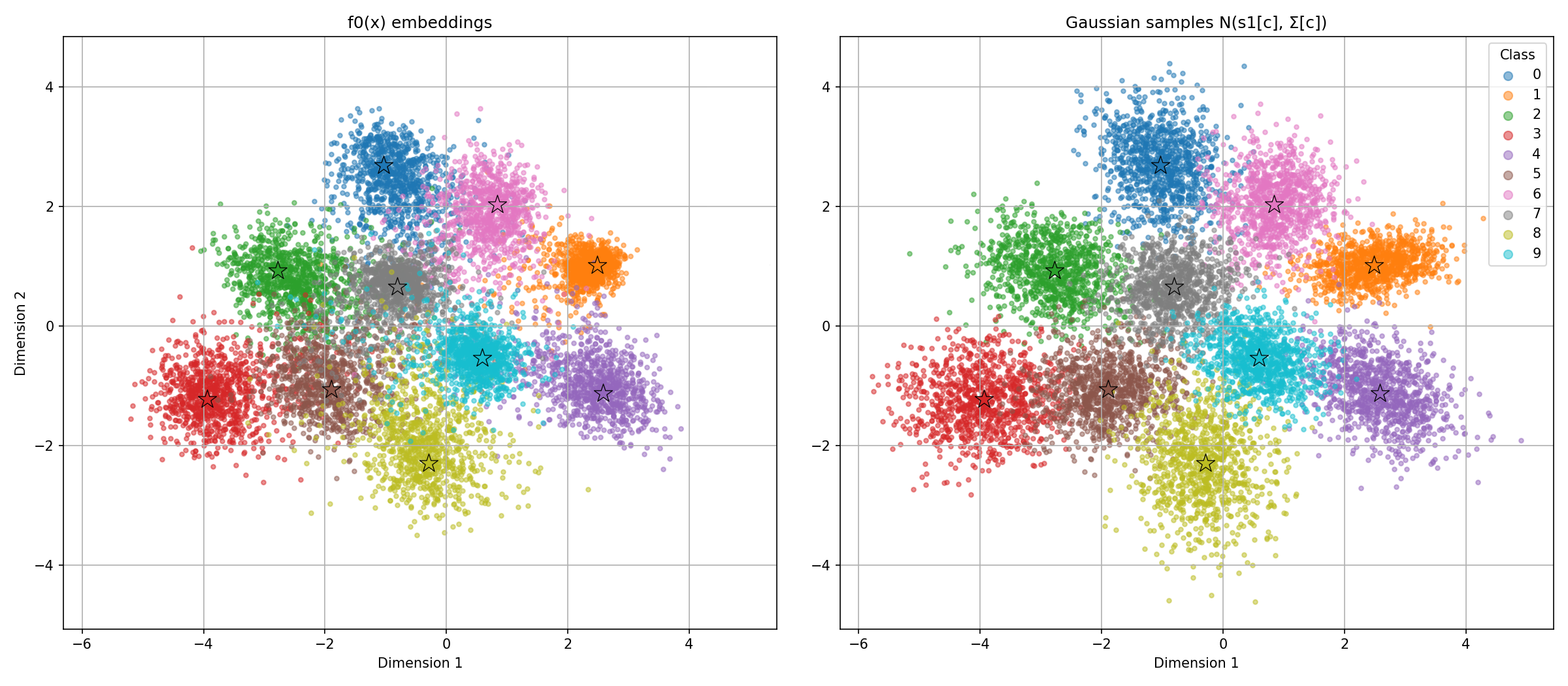}
    \caption{Output of the trained first network part $N_1$ (left) and the sampled input for the second part $N_2$ (right) for the MNIST \citep{deng2012mnist} dataset. The stars denote the position of the lifted variables, i.e., the class means of the sampled normal distribution. Here, $N_1$ consists of two convolutional layers, mapping to $\mathbb{R}^2$. $N_2$ consists of one fully connected layer from $\mathbb{R}^2$ to $\mathbb{R}^{10}$.}
    \label{fig:cluster}
\end{figure}

\paragraph{Contributions.}
\begin{enumerate}
  \item We formulate lifting at a single semantic interface of a
        classifier, with a Gaussian class-prototype structure at the lifting
        point and no gradient across it (\cref{sec:method},
        \cref{alg:lifted}).  The deployed architecture is unchanged.
  \item We demonstrate consistent accuracy improvements over both a matched
        end-to-end control and an unsplit baseline, up to five percentage points
        across six dataset-architecture pairs, together with a 
        $k,\rho_{\max}$ sweep on CIFAR-100 for the ViT-S backbones (\cref{sec:experiments}).
  \item We give a theoretical account of the gain (\cref{sec:theory}). An
        interface risk-transfer bound shows that the sampled classification
        loss is the deployed risk up to mismatch terms the objective itself
        controls (\cref{thm:risk-transfer}). A second-order analysis
        identifies sampling as an explicit sensitivity regularizer on the
        head (\cref{prop:smoothing}). Finally, we consider an exactly solvable one-dimensional
        model in which the selection effect is closed-form
        (\cref{prop:toy-noise-selection}).
\end{enumerate}

The Gaussian distributions we impose per class is empirically well-motivated by the
neural collapse phenomenon~\citep{papyan2020prevalence}. Deep classifiers
trained with cross-entropy converge to tight class-conditional clusters with
means arranged as a Simplex Equiangular Tight Frame during the terminal phase of
training.  Our lifted loss promotes a similar geometry from initialization rather than
waiting for it to emerge asymptotically (\cref{fig:cluster}). During this process
the within-class variance degenerates towards zero, conflicting with our approach of
sampling from the empirical covariance. We address this explicitly in \cref{sec:variance-floor}.

The term \emph{lifting} has been used for a multitude of ideas across various fields. 
This work is conceptually distinct from the ``lifted neural networks'' of \citet{askari2018lifted},
which introduce auxiliary activation variables at
every layer to enable block-coordinate descent without imposing any
distributional structure.  It also differs from metric-learning methods that use
the word ``lifted'' in the sense of lifting pairwise distances to the full-batch
distance matrix~\citep{song2016deep}.  Our approach introduces a single set of
class prototypes at one semantically meaningful location, grounded both
in the multiple-shooting tradition and in the distributional assumptions of
variational latent-variable models.

\paragraph{Notation.}
Throughout, $n$ is the number of classes, $m$ the number of training samples,
$d$ the input dimension and $k$ the embedding (lifting) dimension.  We write
$[n]=\{1,\ldots,n\}$.  The network is split as $N=N_2\circ N_1$ with
parameters $\theta_1,\theta_2$; $N_1:\R^{d}\to\R^{k}$ and
$N_2:\R^{k}\to\R^{n}$.  Class prototypes are $s_1,\ldots,s_n\in\R^{k}$,
collected column-wise in $S\in\R^{k\times n}$.  For a symmetric positive
semi-definite $A$ we write $\kappa(A)=\lambda_{\max}(A)/\lambda_{\min}^{+}(A)$,
with $\lambda_{\min}^{+}$ the smallest positive eigenvalue, and
$\vecop$ denotes column-stacking vectorisation, so that
$\vecop(AB)=(B^\top\otimes I)\vecop(A)$.

\section{Related Work}
\label{sec:related}

\paragraph{Lifting and decoupled training.}
For the numerical solution of optimal control problems direct multiple shooting~\citep{bock1984multiple}
has become a standard approach. The decomposition into smaller subproblems together with matching
constraints has various advantages. Among those are superior stability, reduced nonlinearity, and
improved convergence speed. The latter aspect has been investigated for boundary value and optimal
control problems, including neural ODEs~\citep{lampel2025liftings,lampel2026lifting,chen2018neural}.
For multiple shooting the matching constraints hold exactly at the solution~\citep{nocedal2006numerical},
whereas we only enforce it using a penalty term in our final objective. The analogy is therefore
structural rather than one of constraint satisfaction.

The same idea appears in deep learning as per-layer auxiliary variables: lifted neural
networks~\citep{askari2018lifted} and their Fenchel
relaxation~\citep{gu2020fenchel}, the Method of Auxiliary
Coordinates~\citep{carreirawang2014mac}, and ADMM
training~\citep{taylor2016training}.  Related work removes the end-to-end path
using local error signals~\citep{nokland2019training}, greedy layer-wise
training~\citep{belilovsky2019greedy}, or predicted
gradients~\citep{jaderberg2017decoupled}. Techniques from differential equations were transferred to
deep residual neural networks to enable layer-parallel training \citep{gunther2020layer}.
All these decouple every layer and none impose a structure on the latent representation.

\paragraph{Class prototypes and margin losses.}
Nearest-class-mean classifiers~\citep{mensink2013distance} and Prototypical
Networks~\citep{snell2017prototypical} evaluate an input based on its distance to a
class centroid, which is computed either post hoc or from an episode support set.
In constrast, our $s_i$ are being optimized with the network and enter the losses
of both subnetworks. The closest related idea is center loss~\citep{wen2016discriminative},
which jointly learns a center $c_y$ per class and penalizes $\|f(x)-c_y\|_2^2$ alongside
softmax. The consensus term of \eqref{eq:combinedLoss} coincides with this
penalty. What is new in our approach is the removal of the gradient path from the class loss
back to the parameters of $N_1$ and the inter-class repulsion $\mathcal{P}$.
Finally, angular-margin objectives~\citep{liu2016large,liu2017sphereface,deng2019arcface}
constrain the classifier weights, while remaining end-to-end.

\paragraph{Gaussian embeddings and neural collapse.}
\citet{wan2018rethinking} proposed the Large-Margin Gaussian Mixture
(L-GM) loss, which models deep features as class-conditional Gaussians and adds a
likelihood term to the cross-entropy objective.  This is the closest existing
work in terms of the assumed distributional model assumed. Both L-GM and our method
impose per-class Gaussianity in the penultimate feature space.  The distinction
is that L-GM trains end-to-end, while our lifted loss decouples $N_1$ from $N_2$.
\citet{lee2018simple} confirm empirically that class-conditional
Gaussians fit penultimate-layer features well, using a Mahalanobis-distance
detector for out-of-distribution inputs.  The \emph{neural collapse} phenomenon
of \cite{papyan2020prevalence} further establishes that
deep classifiers naturally develop class-conditional clusters with means on a
Simplex ETF at the end of training. \cite{zhu2021geometric} proved the
global optimality of this structure under an unconstrained features model.
Closest in spirit to our approach of enforcing structure is \cite{yang2022inducing},
who \emph{fix} the classifier to a Simplex ETF and train only the backbone against it.
That work fixes the geometry a priori while we
learn the prototypes and additionally cut the gradient path between the two subnetworks.

\paragraph{Latent-variable models.}
We borrow the reparameterization trick from the variational auto-encoders 
(VAE)~\citep{kingma2013auto}, and
the per-class Gaussian structure is shared with deep latent Gaussian
mixtures~\citep{nalisnick2016approximate,kingma2014semi} and with
FlowGMM~\citep{izmailov2020semi}. The latter one enforces class-conditional Gaussianity in
a normalizing-flow latent space~\citep{rezende2015variational}. There are, however,
two important differences. First, flows require invertible architectures, incompatible with a standard
ResNet~\citep{he2016deep}, whereas our Gaussian structure is a soft penalty
applicable to any differentiable model. Second, unlike a VAE, our reparameterization is purely a sampling
device for training $N_2$.

\section{Method}
\label{sec:method}

Let $D = \{(x^{(j)}, i^{(j)})\}_{j=1}^{m}$ be a labeled dataset with samples
$x^{(j)} \in \R^{d}$ and labels $i^{(j)} \in [n]$, and let
$D_i = \{(x,l) \in D \mid l = i\}$ denote the samples of class $i$.  Our goal is
to train a network that assigns each sample its corresponding class.

\begin{figure}
\begin{center}
\begin{tikzpicture}[scale=1]
  \node[sig] at (0,0)   (in) {};
  \node[net] at (2.2,0) (na) {$N_1$};
  \node[net] at (4.4,0) (nb) {$N_2$};
  \node[sigout] at (6.6,0) (o)  {};
 
  \node[lbl] at (0,1.05)   {Input};
  \node[lbl] at (2.2,1.15) {First part};
  \node[lbl] at (4.4,1.15) {Second part};
  \node[lbl] at (6.6,1.05) {Output};
  \node[lbl]      at (0,-0.95)  {$x$ with label $1$};
 
  \draw[flow] (in) -- (na);
  \draw[flow] (na) -- (nb);
  \draw[flow] (nb) -- (o);
 
  \begin{scope}[on background layer]
    \node[fit=(na)(nb), draw=black!40, dashed, rounded corners=3pt,
          inner sep=7pt] (box) {};
  \end{scope}
  \node[lbl] at (3.3,-1.2) {$N = N_2 \circ N_1$};
\end{tikzpicture} \\[3.ex]
 
\begin{tikzpicture}
  \node[sig, hlnode] at (0,0)    (in3)  {};
  \node[net]         at (1.75,0) (na3)  {$N_1$};
  \node[sig, draw=acc, fill=acc!10, line width=1pt] at (3.3,0) (out1c) {};
  \node[sig]         at (5.16,0.16) (sn3) {};
  \node[sig]         at (5.08,0.08) (sm3) {};
  \node[sig, hlnode] at (5.00,0.00) (s1c) {};
  \node[samp]   at (6.85,0)  (z1)    {};
  \node[net]    at (8.6,0)   (nb3)   {$N_2$};
  \node[sigout] at (10.3,0)  (out2c) {};
 
  \begin{scope}[shift={(6.85,0)}]
    \draw[acc!55, line width=0.5pt] (-0.34,-0.2) -- (0.34,-0.2);
    \draw[acc, line width=0.85pt] plot[domain=-0.32:0.32,samples=33,smooth]
          (\x,{0.62*exp(-19*\x*\x)-0.2});
  \end{scope}
 
  \node[lbl] at (0,1.05)     {Input};
  \node[lbl] at (1.75,1.05)  {First part};
  \node[lbl] at (5.08,1.30)  {Lifted parameters};
  \node[lbl] at (5.08,1.0)  {(\textbf{Proto}types)};
  \node[lbl] at (8.6,1.05)   {Second part};
  \node[lbl] at (10.3,1.05)  {Output};
 
  \node[lbl] at (0,-0.95)    {$x$ with label $\textcolor{hl}{1}$};
  \node[lbl, align=center] at (3.3,-1.02)
    {first output\\[1pt] \textcolor{acc}{$N_1(x)$}};
  \node[lbl] at (5.00,-0.95) {$\textcolor{hl}{s_1},s_2,\dots,s_n$};
  \node[lbl, align=center] at (7.15,-1.10)
    {$\textcolor{acc}{z_1}=\textcolor{hl}{s_1}+\tilde\Sigma_1^{1/2}\xi$\\[1pt]
     $\xi\sim\mathcal{N}(0,I_k)$};
 
  \draw[flow] (in3) -- (na3);
  \draw[flow] (na3) -- (out1c);
  \draw[flow, draw=hl] (5.42,0) -- (z1);
  \draw[flow] (z1) -- (nb3);
  \draw[flow] (nb3) -- (out2c);
 
  \draw[brk] (4.05,-0.70) -- (4.05,0.70);
  \node[lbl, text=black!45, rotate=90] at (4.34,0) {no gradient};
  \node[lbl, text=black!45, rotate=90] at (3.76,0) {\textbf{Seam}};
 
  \node[est, text=acc] at (5.05,2.30) (cov)
    {\textbf{empirical covariance}: once per epoch, detached\\[1.5pt]
     $\hat\Sigma_1=\widehat{\operatorname{Cov}}\{N_1(x):x\in D_1\}$, regularize to $\tilde\Sigma_1$};
  \draw[estflow] (out1c.north) -- (cov.south -| out1c);
  \draw[estflow] (cov.south -| z1) -- (z1.north);
 
  \draw[hlflow] (0,-1.55) to[out=-90, in=-90, looseness=0.42]
    node[lbl, text=hl, below=2pt, midway]
      {sample of class $1$ $\Rightarrow$ consensus target $s_1$}
    (5.00,-1.55);
\end{tikzpicture}
\end{center}
\caption{\textbf{The lifting procedure.}  The original network (top) is split into two
  parts and one lifted variable is introduced per class (bottom).  The part
  $N_1$ is trained against $s_i$ by the consensus
  term (orange), while $N_2$ is trained on samples $z_i$ drawn around $s_i$, so
  no gradient crosses the interface (seam).  The class-conditional covariance (teal) is
  estimated once per epoch from the embeddings $N_1(x)$ of the whole class,
  regularized by \eqref{eq:shrinkage}, and then used as the
  scale of the sampling step \eqref{eq:reparam}.  At
  inference lifted variables and sampling step are discarded and the
  original path $N_2\circ N_1$ is restored.}
\label{fig:lifting}
\end{figure}

Let $N$ be a neural network which we split into two parts $N_1$ and $N_2$ such
that $N(x) = N_2(N_1(x))$.  Following the reparameterization trick that underlies
variational autoencoders~\citep{kingma2013auto}, we make the central assumption
that the output of $N_1$ is normally distributed within each class in the
embedding space $\R^{k}$. Writing $(X,Y)$ for a random labeled sample,
\begin{equation}
    N_1(X) \mid Y = i \ \sim \ \mathcal{N}(s_i, \Sigma_i),
    \qquad \forall\, i \in [n],
    \label{eq:classAssumption}
\end{equation}
where $s_i \in \R^{k}$ and $\Sigma_i \in \R^{k \times k}$ denote the
class-specific mean and covariance.  Equation~\eqref{eq:classAssumption} is a
modeling assumption and does not hold for any arbitrary trained network. The
consensus penalty below is what makes it approximately true.

The training objective is twofold.  On the one hand, the prototypes $s_1, \ldots, s_n$ must
be positioned such that $N_2$ separates the confidence regions of the
distributions $\mathcal{N}(s_i, \Sigma_i)$.  On the other hand, the output of $N_1$ has to
actually follow \eqref{eq:classAssumption}.

\paragraph{Covariance estimation}
\label{sec:variance-floor}

Rather than learning means and covariances jointly, we treat only the means
$s_1,\ldots,s_n$ as learnable parameters and re-estimate the covariances
empirically once per epoch from the class-conditional embeddings,
\begin{equation}
    \hat{\Sigma}_i = \frac{1}{|D_i|} \sum_{(x,\,\cdot\,) \in D_i}
    \bigl(N_1(x) - \hat{\mu}_i\bigr)\bigl(N_1(x) - \hat{\mu}_i\bigr)^\top,
    \qquad
    \hat{\mu}_i = \frac{1}{|D_i|} \sum_{(x,\,\cdot\,) \in D_i} N_1(x),
    \label{eq:covariance}
\end{equation}
so that each estimate is formed exclusively from the embeddings of the
corresponding class.  Note that \eqref{eq:covariance} centers on the empirical
mean $\hat\mu_i$ rather than on the prototype $s_i$.
Centering on $s_i$ instead would fold the consensus residual into the
covariance and inflate it, so we retain $\hat\mu_i$.

In practice, $\hat\Sigma_i$ is singular whenever $k \ge |D_i|$ and potentially ill-conditioned well before
that (\cref{sec:dimension}). Moreover, as the
consensus penalty tightens, $\hat\Sigma_i \to 0$, so the samples fed to $N_2$
degenerate to the $n$ points $s_1,\ldots,s_n$.  In that limit $N_2$ is trained on
a dataset of $n$ distinct inputs and its behavior between prototypes is
unconstrained, while at inference it receives genuine embeddings $N_1(x)\neq
s_i$. To account for this effect and potential numerical inaccuracies,
we always ensure that the estimate remains positive
definite, adding a scaled identity matrix if necessary, i.e.,
\begin{equation}
    \tilde\Sigma_i = 
    \,\hat\Sigma_i
      + \sigma_0^2 I_k,
    \qquad 
    \sigma_0>0,
    \label{eq:shrinkage}
\end{equation}
which then admits a stable Cholesky factorization.
The $\sigma_0$ can equivalently be interpreted as how much of the embedding space
around each prototype $N_2$ is required to classify correctly.

\paragraph{Inter-class repulsion}

As an additional form of regularization, we require the class clusters to be
mutually well separated.  We therefore introduce the penalty
\begin{equation}
    \mathcal{P}(s_1, \ldots, s_n)
    = \sum_{\substack{i,j \in [n] \\ i < j}} \exp\bigl(-\alpha \|s_i - s_j\|_2\bigr),
    \qquad \alpha > 0,
    \label{eq:penalty}
\end{equation}
which decays rapidly once the means are far apart and thus penalizes only those
pairs that remain close. For all benchmarks we chose the fixed value of $\alpha=2$.
The main purpose of this penalty is to separate the class
clusters at the beginning of the training process. Without this penalty all clusters
usually remain very close to each other and are never really separated.

\paragraph{The lifted objective}

Let $\Lcls$ denote the chosen classification loss.  We define the lifted training
objective as
\begin{align}
    \mathcal{L}(\theta_1,\theta_2,S)
    = \underbrace{\frac{\rho}{2m}\sum_{(x,i)\in D}\bigl\|N_1(x) - s_i\bigr\|_2^2}
        _{\text{consensus, trains }\theta_1\text{ and }S}
    &+ \underbrace{\frac{1}{n}\sum_{i=1}^{n}
        \E_{\xi}\bigl[\Lcls\bigl(N_2(z_i),\, i\bigr)\bigr]}
        _{\text{classification, trains }\theta_2\text{ and }S} \notag \\
    &+ \rho\,\mathcal{P}(s_1,\ldots,s_n),
    \label{eq:combinedLoss}
\end{align}
where the embedding fed to $N_2$ is sampled via the reparameterization
\begin{equation}
    z_i = s_i + \tilde\Sigma_i^{1/2} \xi, \qquad \xi \sim \mathcal{N}(0, I_k),
    \label{eq:reparam}
\end{equation}
with $\tilde\Sigma_i^{1/2}$ the Cholesky factor of \eqref{eq:shrinkage}, treated
as a constant with respect to $\theta_1$ within an epoch.  The first term pulls
the embedding of each sample toward the prototype of its class; the second trains
$N_2$ on samples drawn from the assumed class distribution itself.

Three points deserve emphasis.

\textbf{(i) The repulsion weight is shared.}  In our implementation the repulsion
strength is given by $\rho \mathcal{P}(\cdots)$, tied to the penalty $\rho$ of the least squares term. 
Theoretically, these two terms are independent and introducing a separate scale for $\mathcal{P}$
might be worthwhile. However, since we perform a large ablation study over $\rho$,
we believe our formulation to be more practical.

\textbf{(ii) The classification term does not depend on $x$.}  Because $z_i$ is a
function of $(s_i,\tilde\Sigma_i,\xi)$ only, the second term of
\eqref{eq:combinedLoss} depends on the data solely through the class index.
This is the precise sense in which the two subnetworks are decoupled: the
$\theta_2$-subproblem is an $n$-component Gaussian classification problem whose
cost is independent of $m$, and it may be optimized with as many Monte-Carlo
draws per class as desired, independently of the minibatch composition.  We use
the uniform class weighting $\tfrac1n$ in \eqref{eq:combinedLoss}; weighting by
empirical frequency $|D_i|/m$ is the alternative appropriate for imbalanced data.

\textbf{(iii) $\rho$ has two readings.} On the one side, it acts as a conventional penalty weight.
On the other side, $\tfrac{\rho}{2}\|N_1(x) - s_i\|_2^2$ is the negative log-density of an
isotropic Gaussian with variance $\rho^{-1}$ (up to an additive constant). Hence, its
reciprocal quantifies the tolerated spread around each prototype. 

This lifting procedure is applied during training only.  For validation
and at inference time the lifted variables are discarded and the two subnetworks
are reconnected, recovering the original architecture $N = N_2 \circ N_1$.

\begin{algorithm}[t]
\caption{Lifted training of $N = N_2\circ N_1$}
\label{alg:lifted}
\begin{algorithmic}[1]
\Require dataset $D$, epochs $T$, schedule $\rho(\cdot)$, 
         floor $\sigma_0$ 
\State initialize $\theta_1,\theta_2$; initialize $S$ (\cref{sec:init}); set
       $\tilde\Sigma_i \gets \sigma_0^2 I_k$
\For{$t = 1,\dots,T$}
  \For{each minibatch $B\subseteq D$}
    \State $\mathcal{L}_{\mathrm{con}} \gets \frac{\rho(t)}{2|B|}
            \sum_{(x,i)\in B}\|N_1(x)-s_i\|_2^2$
    \State draw $\xi_i\sim\mathcal{N}(0,I_k)$ and set
            $z_i \gets s_i + \tilde\Sigma_i^{1/2}\xi_i$ for $i\in[n]$
           \Comment{$\tilde\Sigma_i$ detached}
    \State $\mathcal{L}_{\mathrm{cls}} \gets
            \frac1n\sum_{i=1}^{n}\Lcls(N_2(z_i), i)$
    \State update $\theta_1,\theta_2,S$ on
            $\mathcal{L}_{\mathrm{con}}+\mathcal{L}_{\mathrm{cls}}
             +\rho\,\mathcal{P}(S)$
  \EndFor
  \State estimate $\hat\Sigma_i$ over $D$ and regularize
         $\tilde\Sigma_i$ by \eqref{eq:shrinkage} if necessary
\EndFor
\State \Return $N_2\circ N_1$ \Comment{$S$, $\tilde\Sigma$ discarded}
\end{algorithmic}
\end{algorithm}

The per-epoch empirical covariance computation in \cref{alg:lifted} costs one additional
forward pass over the training set, i.e.\ roughly a $1/3$ overhead per epoch
relative to a standard forward--backward pass, plus $O(nk^2)$ memory for the
factors.

There are alternative and considerably cheaper approaches to compute the covariance.
For example, we could compute and add up the dyadic products of $N_1(x)-s_i$ during the training process
to obtain running estimates of the class covariances for the next epoch. In our experiments, this produced almost
identical results but occasionally led to numerical difficulties whenever the output distribution
of $N_1$ changed rapidly during one epoch. An even cheaper approach, with practically no additional cost,
would be to estimate the covariance using a scaled identity matrix. For the right scaling of the identity matrices
this once again produced similar results to the empirical covariance computation. Based on the interpretation of $\rho$,
choosing a scaling of $\rho^{-1}$ would be the obvious choice.
A reasonable middle ground would be to start with the separate covariance computation and then switch to the faster variant once
the learning rate has decayed enough. For our benchmarks, the main quantity of interest
is the final accuracy. Therefore, and to avoid further hyperparameter dependencies through scaling, we use an additional pass to compute the empirical covariance after every epoch. 

\subsection{Design Choices}
\label{sec:design}

The formulation above leaves several choices open which we discuss in the following and investigated with numerical sweeps
reported in \cref{sec:experiments}.

\paragraph{Lifting location.}
The lifting point should not immediately follow a ReLU since the output of $N_1$ would
then be confined to the non-negative orthant. In this case, a cluster whose prototype lies
on the boundary of that region cannot be symmetrically surrounded by the Gaussian
of \eqref{eq:classAssumption}.  Hence, the model is misspecified exactly where the
consensus penalty is tightest.  We therefore always lift before such an
activation or before the final classification layer. Given our assumption that
every class is normally distributed around one mean, the network $N_2$ need not
be very complex to separate these clusters. In fact, we only use a single fully
connected layer for $N_2$. Using more complex modeling assumptions, such as a 
Gaussian mixture for every class, might necessitate larger $N_2$.
We address the detailed construction in~\cref{sec:experiments}.

\paragraph{Lifting dimension.}
\label{sec:dimension}
Our approach assumes that the output of $N_1$ is normally distributed within each
class and estimates the corresponding covariance from data.  A non-singular
sample covariance in dimension $k$ requires at least $k+1$ samples per class, so
$\min_i |D_i| > k$ is a hard upper bound on the embedding dimension. Regularization
might rectify this issue, but accuracy requires considerably more.
For sub-Gaussian class-conditional embeddings,
the sample covariance $\hat\Sigma_{m'}$ formed from $m'$ samples
satisfies~\cite[Thm.~4.7.1 and Rem.~4.7.2]{vershynin2018high}
\begin{equation}
    \E\bigl\|\hat{\Sigma}_{m'} - \Sigma\bigr\|
    \;\le\; C\Bigl(\sqrt{\tfrac{k}{m'}} + \tfrac{k}{m'}\Bigr)\,\|\Sigma\|,
    \label{eq:vershynin}
\end{equation}
with an absolute constant $C$ depending only on the sub-Gaussian norm, so a
relative error $\varepsilon$ requires $m' \gtrsim \varepsilon^{-2} k$.
This bound is very restrictive, applying it to CIFAR-100 with $m' = 500$ training images per class results for 
$\varepsilon = 0.1$ in $k \lesssim 5$. In our experiments we observe better performance for way higher dimensions than this. We conjecture that this is a combined effect of the additional repulsion term and the covariances tending towards zero, rendering the relative error bound \eqref{eq:vershynin} unimportant.

\paragraph{Initialization of class means.}
\label{sec:init}
The lifted variables admit custom initialization.  The natural choices are the
empirical class means $s_i = \E_{X \mid Y=i}[N_1(X)]$ computed from the
initial network. Following \citet{lampel2025liftings,lampel2026lifting},
we refer to this approach as Forward Sweep Initialization (FSInit).

Using FSInit is motivated by the observation that semantically similar classes are also close to each other in the embedding space,
compare Figure~\ref{fig:cluster}. Therefore, we expect the FSInit outcome to be an approximation of this final positioning, thus avoiding
having to reorder the class clusters. However, the benefits clearly
depend on the initialization of the network weights $\theta_1$. 

Alternative initializations are based on random or user-guided values.
Also unit vectors can be used if the embedding dimension $k$ is equal to the number $n$ of classes. 
We only observed a minor impact of initialization on the final accuracy, with FSInit performing best.

\paragraph{Penalty annealing.}
\label{sec:annealing}
Rather than holding $\rho$ fixed, we increase it depending on epoch $t$ as
\begin{equation}
\rho(t) = \rho_{\min} + (\rho_{\max}-\rho_{\min})
  \sin\!\left(\frac{\pi}{2}\cdot\frac{t-1}{T-1}\right),
  \qquad t = 1,\dots,T .
\label{eq:anneal}
\end{equation}
This increases the penalty fastest towards the beginning of training, then slowly approaches $\rho_{\max}$ for larger $t$. 
Using smaller values of $\rho(t)$ in the early training phase is motivated by allowing the clusters to separate rather than collapse immediately.
For later training epochs $t$ the enforcement of consensus becomes a higher priority.

\section{Experiments}
\label{sec:experiments}

\subsection{Protocol}
For a given dataset $\mathcal{D}$ and architecture family $M$, we train and
compare three variants under a common recipe, so that any accuracy gap can be
attributed to the lifting mechanism rather than to confounding architectural or
optimization differences.

\begin{description}[
  style=sameline,
  leftmargin=\widthof{\textbf{Unlifted.}}+1em,
  labelwidth=\widthof{\textbf{Unlifted.}},
  labelsep=1em
]
  \item[Baseline.] An independent, non-split reference network is trained end-to-end in the standard way, providing an external
    calibration point against a conventional model of the same family.
  \item[ProtoSeam.] The architecture is partitioned into a feature extractor $N_1$
    and a classifier head $N_2$, coupled through the per-class prototype matrix
    $S\in\R^{k\times n}$.  Parameters $\theta_1,\theta_2,S$ are optimized under
    \eqref{eq:combinedLoss} with penalty weight $\rho$. We lift just before the
    classification layer, meaning that the last fully connected layer of the 
    original architecture (the baseline) is replaced by one mapping to $\mathbb{R}^k$ to form
    $N_1$. Consequently, $N_2$ consists of only one fully connected layer mapping
    from $\mathbb{R}^{k}$ to $\mathbb{R}^n$.
  \item[Unlifted.] 
    The ProtoSeam architecture is trained end-to-end by
    ordinary backpropagation, without consensus penalty, the repulsion term and
    the auxiliary variable $S$. This separates the effect of the lifting mechanism from those
    of the minor architecture change between Baseline and ProtoSeam (remember that the final fully connected layer
    in Baseline mapping to $\mathbb{R}^n$ is replaced by two fully connected layers; the first maps to $\mathbb{R}^k$
    and the second from $\mathbb{R}^k$ to $\mathbb{R}^n$, with no activation function in-between).
\end{description}

\paragraph{Sweep.} We perform an exemplary sweep across both the embedding dimension $k$ and the penalty $\rho_{\max}$ for the ViT-S on CIFAR-100 in \cref{fig:penalty-sweep}.
Tuning both these values for every model type and dataset would increase the lifted accuracy further, but also entail a further expensive hyperparameter search.
Therefore, we chose an embedding dimension of $k=32$ and a penalty of $\rho_{\max}=16$ for all further benchmarks in~\cref{sec:results}.

\paragraph{Common training recipe.} All variants use the same splits, epoch
budget $T$, and optimizer family (SGD with Nesterov momentum, weight decay, short
linear warm-up, cosine-annealed decay).

\begin{remark}[Hyperparameter tuning]
\label{rem:confound} To ensure that the better accuracy is not merely an artifact
of different hyperparameters, we ran a large grid search to determine the best learning rate
and weight decay for both the lifted and the baseline variant, for every dataset and model.
For the unlifted variant we used the same hyperparameters as for the baseline, since both train end-to-end.
The determined optimal parameters are summarized in~\cref{tab:hyperparameters}.
\end{remark}

\subsection{Datasets and architectures}
We consider a ResNet~\citep{he2016deep} and a custom small Vision Transformer (ViT-S)~\citep{dosovitskiy2021image}
backbone on CIFAR-10, CIFAR-100~\citep{krizhevsky2009learning}, and
TinyImageNet~\citep{le2015tiny}. On the one hand these models and datasets are complex
enough so that there are still meaningful gains in accuracy to be made. On the other hand,
they are small enough to make an exhaustive comparison across a wide range of hyperparameters.

\definecolor{sweepblue}{RGB}{33,94,163}
\definecolor{sweeporange}{RGB}{217,95,2}

\begin{figure}[h]
\centering
\begin{tikzpicture}
\begin{groupplot}[
    group style={group size=2 by 1, horizontal sep=0.55cm, yticklabels at=edge left},
    width=0.54\linewidth,
    height=5.cm,
    scale only axis=false,
    ymin=58, ymax=74,
    ytick={60,64,68,72},
    xmode=log, log basis x=2,
    log ticks with fixed point,
    ylabel={Test accuracy (\%)},
    ylabel style={font=\small},
    xlabel style={font=\small},
    title style={font=\small, yshift=-3pt},
    tick label style={font=\footnotesize},
    tick align=outside, tick pos=left,
    major tick length=2.5pt,
    grid=major,
    major grid style={line width=0.3pt, draw=black!12},
    axis line style={line width=0.5pt, draw=black!70},
    every axis plot/.append style={line width=0.9pt},
    error bars/error bar style={line width=0.6pt},
    error bars/error mark options={rotate=90, mark size=2pt, line width=0.6pt},
    legend style={font=\footnotesize, draw=none, fill=white, fill opacity=0.9,
                  text opacity=1, row sep=-1pt, inner sep=2pt},
    legend cell align=left,
]
 
\nextgroupplot[
    title={(a) Fixed $k = 32$},
    xlabel={Terminal penalty weight $\rho_{\max}$},
    xtick={1,2,4,8,16,32},
    xticklabels={1,2,4,8,16,32},
    xmin=0.8, xmax=40,
    legend pos=south east,
]
\addplot[color=sweepblue, mark=*, mark size=2pt,
         error bars/.cd, y dir=both, y explicit]
  table[x=x, y=y, y error plus=ep, y error minus=em] {
    x   y        ep      em
    1   60.2467  0.3233  0.2667
    2   64.0667  0.1933  0.1867
    4   67.5167  0.7933  0.6967
    8   69.3267  0.5633  0.7267
    16  70.4333  0.7067  0.7533
    32  71.2533  0.7167  0.7733
  };
\addlegendentry{mean, min--max over 3 seeds}
\addplot[only marks, mark=o, mark size=3.6pt, color=sweeporange, line width=1.1pt]
  coordinates {(16,70.4333)};
\addlegendentry{setting used in Sec.~4.3}
 
\nextgroupplot[
    title={(b) Fixed $\rho_{\max} = 16$},
    xlabel={Embedding dimension $k$},
    ylabel={},
    xtick={8,16,32,64,128},
    xticklabels={8,16,32,64,128},
    xmin=6.4, xmax=160,
]
\addplot[color=sweepblue, mark=*, mark size=2pt,
         error bars/.cd, y dir=both, y explicit]
  table[x=x, y=y, y error plus=ep, y error minus=em] {
    x    y        ep      em
    8    66.4567  0.1133  0.1167
    16   69.5233  0.4467  0.2333
    32   70.4333  0.7067  0.7533
    64   70.9700  0.6900  0.4100
    128  71.1800  0.1500  0.2600
  };
\addplot[only marks, mark=o, mark size=3.6pt, color=sweeporange, line width=1.1pt]
  coordinates {(32,70.4333)};
 
\end{groupplot}
\end{tikzpicture}

\caption{ViT-S on CIFAR-100, using the empirically computed covariance. 
On the left hand side we compare the test accuracy for different values of
$\rho_{\max}$ and a fixed embedding dimension of $k=32$.
The right plot shows the test accuracy for different embedding dimensions,
this time for a fixed penalty of $\rho_{\max}=16$.
For both ablations we observe a monotonic climb which first increases steeply
and then stabilizes. There is no meaningful difference between the test and validation accuracies. We use $\rho_{\max}=16$ and $k=32$ in the following, which we found to work best across all datasets.}
\label{fig:penalty-sweep}
\end{figure}

\subsection{Results}
\label{sec:results}
We observe a consistent trend across both architectures. Our method always beats both the baseline and the unlifted variant. Therefore, we can rule out the possibility that the advantage comes from the altered architecture.
For TinyImageNet, the accuracy improves by more than five percentage points compared to the baseline for ViT-S.

We show an exemplary training run in \cref{fig:tin-vit-training-curves}, which presents a typical pattern for all training runs. At first, our method trails behind, while the output of $N_1$ is still noisy and the class prototypes are being positioned. After a while, our lifted formulation then overtakes both other variants and maintains its lead.

All technical details are described in~\cref{sec:tech_details}.

\begin{figure}[h]
\centering
\begin{tikzpicture}
\begin{axis}[
    xlabel={Epoch},
    ylabel={Validation accuracy (\%)},
    xmin=0, xmax=200,
    ymin=0, ymax=56,
    grid=major,
    width=0.7\textwidth,
    height=5.5cm,
    legend pos=south east,
    legend cell align=left,
]
\addplot[name path=upperbase, draw=none, forget plot] table {%
x y
5 15.550
10 23.320
15 28.350
20 25.700
25 28.910
30 27.850
35 27.680
40 34.340
45 32.730
50 33.070
55 33.490
60 36.920
65 37.470
70 37.500
75 35.200
80 35.790
85 39.240
90 39.320
95 39.490
100 40.180
105 39.930
110 41.240
115 40.930
120 41.230
125 42.110
130 43.660
135 44.330
140 44.220
145 44.680
150 45.200
155 45.470
160 46.540
165 46.650
170 47.220
175 47.040
180 47.930
185 47.980
190 48.300
195 48.510
200 48.620
};
\addplot[name path=lowerbase, draw=none, forget plot] table {%
x y
5 15.200
10 20.650
15 25.430
20 22.270
25 25.620
30 24.250
35 26.670
40 30.160
45 28.530
50 31.150
55 31.420
60 29.790
65 31.050
70 34.290
75 31.530
80 33.260
85 36.160
90 35.690
95 37.030
100 34.590
105 38.510
110 38.710
115 38.660
120 40.270
125 40.670
130 40.630
135 42.790
140 42.990
145 43.150
150 44.110
155 45.070
160 45.210
165 45.570
170 45.660
175 46.500
180 46.630
185 46.900
190 47.130
195 46.950
200 47.050
};
\addplot[gray, opacity=0.18, forget plot] fill between[of=upperbase and lowerbase];

\addplot[name path=upperunl, draw=none, forget plot] table {%
x y
5 14.130
10 20.260
15 24.550
20 25.750
25 26.330
30 28.890
35 28.850
40 25.940
45 28.950
50 31.330
55 30.260
60 29.270
65 33.600
70 33.310
75 31.850
80 33.250
85 34.980
90 35.070
95 34.030
100 34.510
105 37.100
110 37.990
115 38.110
120 39.100
125 41.060
130 40.860
135 41.120
140 42.930
145 43.650
150 44.470
155 44.670
160 45.500
165 45.900
170 46.980
175 47.640
180 47.860
185 48.000
190 48.620
195 49.000
200 48.860
};
\addplot[name path=lowerunl, draw=none, forget plot] table {%
x y
5 12.500
10 18.330
15 20.250
20 23.940
25 20.420
30 22.580
35 23.470
40 24.250
45 21.700
50 24.630
55 23.420
60 25.690
65 26.490
70 28.030
75 27.420
80 30.440
85 28.740
90 32.760
95 30.900
100 32.470
105 33.180
110 33.960
115 36.430
120 36.960
125 36.840
130 40.460
135 39.040
140 40.270
145 41.030
150 42.760
155 43.830
160 44.600
165 45.390
170 45.710
175 45.800
180 46.350
185 45.950
190 46.540
195 46.640
200 46.760
};
\addplot[orange!80!black, opacity=0.18, forget plot] fill between[of=upperunl and lowerunl];

\addplot[name path=upperlif, draw=none, forget plot] table {%
x y
5 7.490
10 10.790
15 14.350
20 15.190
25 19.210
30 20.610
35 22.210
40 24.290
45 26.460
50 27.880
55 29.360
60 30.190
65 32.120
70 32.350
75 33.550
80 35.270
85 36.380
90 37.940
95 37.720
100 40.280
105 40.820
110 42.300
115 42.930
120 44.370
125 44.420
130 45.850
135 46.950
140 47.580
145 48.200
150 49.210
155 50.010
160 50.410
165 50.820
170 51.440
175 51.820
180 52.090
185 52.650
190 52.730
195 52.770
200 52.880
};
\addplot[name path=lowerlif, draw=none, forget plot] table {%
x y
5 6.580
10 10.120
15 12.490
20 14.660
25 17.750
30 18.810
35 21.680
40 23.070
45 22.950
50 26.290
55 25.630
60 28.250
65 29.840
70 31.140
75 31.690
80 30.960
85 33.900
90 35.510
95 36.710
100 38.150
105 39.140
110 39.620
115 40.540
120 43.520
125 43.490
130 43.090
135 45.860
140 46.370
145 47.080
150 47.600
155 48.590
160 49.050
165 49.270
170 50.170
175 50.520
180 50.960
185 51.460
190 51.300
195 51.340
200 51.340
};
\addplot[blue!70!black, opacity=0.18, forget plot] fill between[of=upperlif and lowerlif];

\addplot[color=gray, thick, mark=none] table {%
x y
5 15.377
10 22.040
15 27.060
20 23.473
25 27.603
30 26.383
35 27.310
40 32.287
45 30.413
50 32.063
55 32.540
60 33.633
65 33.833
70 36.143
75 33.937
80 34.750
85 37.587
90 37.180
95 38.200
100 37.147
105 38.993
110 39.970
115 40.137
120 40.713
125 41.263
130 42.067
135 43.537
140 43.573
145 43.933
150 44.573
155 45.240
160 45.780
165 45.953
170 46.637
175 46.727
180 47.150
185 47.307
190 47.567
195 47.593
200 47.707
};
\addlegendentry{Baseline}

\addplot[color=orange!80!black, thick, mark=none] table {%
x y
5 13.493
10 19.263
15 22.637
20 24.623
25 22.983
30 25.213
35 25.533
40 25.247
45 26.083
50 27.090
55 27.200
60 27.537
65 30.983
70 31.217
75 29.297
80 32.267
85 31.803
90 34.147
95 32.630
100 33.813
105 35.020
110 35.460
115 37.513
120 37.967
125 38.643
130 40.720
135 39.927
140 41.557
145 42.433
150 43.353
155 44.253
160 44.913
165 45.567
170 46.227
175 46.800
180 47.073
185 47.150
190 47.713
195 47.750
200 47.777
};
\addlegendentry{Unlifted}

\addplot[color=blue!70!black, thick, mark=none] table {%
x y
5 7.083
10 10.567
15 13.117
20 14.983
25 18.393
30 19.937
35 21.890
40 23.867
45 24.377
50 27.043
55 27.510
60 29.060
65 30.920
70 31.737
75 32.530
80 33.337
85 35.260
90 36.657
95 37.317
100 39.367
105 39.853
110 41.237
115 41.797
120 43.937
125 43.890
130 44.673
135 46.433
140 46.970
145 47.527
150 48.453
155 49.350
160 49.760
165 50.140
170 50.670
175 51.253
180 51.580
185 52.173
190 52.200
195 52.263
200 52.353
};
\addlegendentry{ProtoSeam}

\end{axis}
\end{tikzpicture}
\caption{ViT-S on TinyImageNet: validation accuracy over the epochs for
baseline, unlifted, and lifted variant. We show the mean
over $3$ seeds ($42,43,44$), sampled every $5$ epochs. The shaded bands show
the full seed range (min to max) at each sampled epoch.}
\label{fig:tin-vit-training-curves}
\end{figure}

  \begin{table}[h]
  \centering
  \setlength{\tabcolsep}{4pt}
  \renewcommand{\arraystretch}{1.15}
  \small
  \begin{tabular}{lllrrrrr}
  \toprule
  Dataset & Model & Variant & seed 42 & seed 43 & seed 44 & Mean & Spread \\
  \midrule
  CIFAR-10 & ResNet-8 & Baseline & $94.88$ & $95.09$ & $94.77$ & $94.91$ & $0.32$ \\
   &  & Unlifted & $95.09$ & $95.38$ & $95.00$ & $95.16$ & $0.38$ \\
   &  & ProtoSeam & $95.27$ & $95.54$ & $95.57$ & $\mathbf{95.46}$ & $0.30$ \\
  \cmidrule(lr){2-8}
   & ViT-S & Baseline & $89.45$ & $89.50$ & $89.34$ & $89.43$ & $0.16$ \\
   &  & Unlifted & $90.55$ & $89.94$ & $89.66$ & $90.05$ & $0.89$ \\
   &  & ProtoSeam & $90.64$ & $90.34$ & $90.17$ & $\mathbf{90.38}$ & $0.47$ \\
  \midrule 
  CIFAR-100 & ResNet-8 & Baseline & $76.76$ & $76.96$ & $76.55$ & $76.76$ & $0.41$ \\
   &  & Unlifted & $74.12$ & $74.04$ & $74.55$ & $74.24$ & $0.51$ \\
   &  & ProtoSeam  & $78.18$ & $78.21$ & $78.63$ & $\mathbf{78.34}$ & $0.45$ \\
  \cmidrule(lr){2-8}
   & ViT-S & Baseline & $66.11$ & $65.63$ & $65.99$ & $65.91$ & $0.48$ \\
   &  & Unlifted & $65.49$ & $65.38$ & $64.04$ & $64.97$ & $1.45$ \\
   &  & ProtoSeam & $70.48$ & $71.00$ & $70.12$ & $\mathbf{70.53}$ & $0.88$ \\
  \midrule
  TinyImageNet & ResNet-8 & Baseline & $63.17$ & $63.13$ & $63.25$ & $63.18$ & $0.12$ \\
   &  & Unlifted & $57.70$ & $58.51$ & $58.75$ & $58.32$ & $1.05$ \\
   &  & ProtoSeam & $65.64$ & $65.48$ & $65.69$ & $\mathbf{65.60}$ & $0.21$ \\
  \cmidrule(lr){2-8}
   & ViT-S & Baseline & $46.42$ & $47.45$ & $47.01$ & $46.96$ & $1.03$ \\
   &  & Unlifted & $48.54$ & $47.48$ & $47.76$ & $47.93$ & $1.06$ \\
   &  & ProtoSeam & $51.71$ & $51.88$ & $52.78$ & $\mathbf{52.12}$ & $1.07$ \\
  \bottomrule
  \end{tabular}
  \caption{Multi-seed confirmation (seeds 42/43/44): test accuracy (\%) for
  baseline, unlifted, and lifted across both 
  architectures and all three datasets. \textbf{Bold} = lifted's mean, which
  beats both comparators at every individual seed on every combination.
  }
  \label{tab:main}
  \end{table}
\section{Conclusion}
\label{sec:conclusion}
We introduced a new technique for training neural networks for classification problems.
By splitting a network $N=N_2\circ N_1$ at a single interface and inserting one Gaussian prototype per class, 
we formulated an objective in which $N_1$ is trained toward the prototype of its class and $N_2$ on samples
drawn around it, with no gradient crossing the seam. 
At inference the prototypes are discarded, and the unmodified network is deployed at no additional cost.
Across CIFAR-10, CIFAR-100 and TinyImageNet with ResNet-8 and ViT-S backbones,
lifted training improved test accuracy over both a baseline model and the
matched end-to-end unlifted control on every dataset--architecture pair and at every
seed, by up to $5.2$ and $7.3$ percentage points, respectively
(\cref{tab:main}). The unlifted control rules out the modified architecture as the source.
On the theoretical side, we addressed the question of why lifting improves the final accuracy
in~\cref{sec:theory}. 
Moreover, we constructed a one-dimensional model which exhibits the resulting
effect in closed form (\cref{sec:one-dimensional-mechanism}). Together, these results
identify a mechanism by which lifting selects a different, locally more robust
local optimum, rather than optimizing the same objective faster.

\subsection*{Reproducibility statement}

The software used to produce all results in this work will be made publicly
available in a Git repository upon acceptance of this paper.

\subsection*{AI use statement}
In this work, we used generative AI tools to formulate and prove the mathematical claims in the appendix, for feedback on research methodology or experiments, and interpretation of results.
We have not used generative AI tools for the generation of synthetic datasets, developing theoretical models, assistance with translation, reformatting datasets, or qualitative and thematic data analysis.
Additionally, we used generative AI tools for refining the language, grammar, and readability of this paper, for formatting and the creation of figures, to discover relevant literature, and to assist in the writing of the software code.
We have reviewed all AI-assisted work. We checked LLM-generated proofs and theorems for correctness. The LLM-generated code was verified and tested for correctness. We take responsibility for the final content of this work, including text, claims or artifacts produced with the aid of generative AI.

\subsection*{Ethics statement}
This is a methodological contribution to supervised image classification.  It
involves no human subjects, no personal or otherwise sensitive data, and no
collection of new data.  All experiments use established public benchmarks
(CIFAR-10, CIFAR-100 and TinyImageNet) in their standard form and within their
intended research use.  We are therefore not aware of any application risk it
introduces beyond those already associated with image classifiers in general.

\section*{Acknowledgments}
The project has been funded by 
the Deutsche Forschungsgemeinschaft (DFG, German Research Foundation) via
grant 314838170, GRK 2297 MathCoRe and
the priority program 2331 'Machine Learning in Chemical Engineering' under grant SA 2016/3-1;
the European Regional Development Fund under the European Union's Horizon Europe Research and Innovation Program via
grant timingMatters ZS/2023/12/182063,
grant intelAlgen ZS/2023/12/182064,
grant Center for Dynamic Systems ZS/2023/12/182075;
and
the Innovationsfonds des Gemeinsamen Bundesausschusses (Innovation Committee of the Federal Joint Committee) under grant number KlimaNot (01VSF23017)
which we gratefully acknowledge.

\newpage

\bibliographystyle{styles/iclr2027_conference}
\bibliography{references}
\newpage
\appendix
\section{Technical details}
\label{sec:tech_details}
For both the ResNet model and the ViT and every dataset we performed a hyperparameter grid search across the learning rates $\{0.01, 0.02, 0.05, 0.1\}$ and weight decays $\{10^{-4}, 2 \times 10^{-4}, 5 \times 10^{-4}, 10^{-3}, 2\times 10^{-3}\}$. We always picked the combination with the highest validation accuracy after a full run of 200 epochs. The hyperparameters for the \emph{unlifted} variant were chosen as the ones from the baseline. All determined hyperparameter combinations are summarized in Table~\ref{tab:hyperparameters}.

\begin{table}[h]
  \centering
  \setlength{\tabcolsep}{6pt}
  \renewcommand{\arraystretch}{1.2}
  \begin{tabular}{llllr}
  \toprule
  Dataset & Model & Variant & $\eta$ & $\lambda$ \\
  \midrule
  CIFAR10      & ResNet8 & Baseline           & $0.02$  & $10^{-3}$          \\
               &         & ProtoSeam  & $0.05$  & $5{\times}10^{-4}$ \\
  \cmidrule(lr){2-5}
               & ViT     & Baseline           & $0.02$  & $10^{-4}$          \\
               &         & ProtoSeam  & $0.01$  & $5{\times}10^{-4}$ \\
  \midrule
  CIFAR100     & ResNet8 & Baseline           & $0.05$  & $2{\times}10^{-3}$ \\
               &         & ProtoSeam  & $0.02$  & $2{\times}10^{-3}$ \\
  \cmidrule(lr){2-5}
               & ViT     & Baseline           & $0.01$  & $10^{-4}$          \\
               &         & ProtoSeam  & $0.01$  & $5{\times}10^{-4}$ \\
  \midrule
  TinyImageNet & ResNet8 & Baseline           & $0.05$  & $10^{-3}$          \\
               &         & ProtoSeam  & $0.02$  & $2{\times}10^{-3}$ \\
  \cmidrule(lr){2-5}
               & ViT     & Baseline           & $0.1$   & $2{\times}10^{-4}$ \\
               &         & ProtoSeam  & $0.01$  & $5{\times}10^{-4}$ \\
  \bottomrule
  \end{tabular}
  \caption{Final, confirmed best hyperparameters for every
  (dataset, model, variant) combination determined by the grid search. 
  }
  \label{tab:hyperparameters}
  \end{table}

All three datasets (CIFAR-10, CIFAR-100, TinyImageNet) use the same training-time augmentation pipeline: a random crop with 4-pixel zero-padding back to the native resolution (32×32 for CIFAR-10/100, 64×64 for TinyImageNet), a random horizontal flip, RandAugment (default policy), normalization to each dataset's own per-channel mean/std, and finally random erasing (probability 0.5, area fraction 0.02–0.33). The held-out validation and test sets receive no stochastic augmentation, only normalization. Hence, accuracy on those splits is never inflated by test-time randomness.
\section{Why Lifting Improves Accuracy}
\label{sec:theory}

The gains of \cref{tab:main} call for an explanation.  Two mechanisms could be
responsible.  The first is optimization: the lifted objective may simply be
easier to minimize.  We do not rule this mechanism out, but it is not the one
analysed here.  The second mechanism is
statistical: the lifted objective may select a \emph{different} minimizer.
This section develops the statistical account in three steps.  First, the
sampled classification term is not an arbitrary surrogate: it equals the risk
of the reconnected network up to an explicit interface-mismatch term, and
every component of that term is bounded by a quantity that the method
controls (\cref{thm:risk-transfer,cor:controlled}).  Second, relative to the
loss of the head at the prototypes, sampling adds a non-negative smoothing
penalty, which to second order is a curvature penalty on the head
(\cref{prop:smoothing}): in the small-perturbation regime, among heads with
equal loss at the prototypes, the sampled objective prefers the one whose loss
curves least around them.  Third,
\cref{sec:one-dimensional-mechanism} exhibits a small instance in which
this selection effect can be computed in closed form.

\subsection{The Sampled Loss is the Deployed Risk up to Interface Mismatch}
\label{sec:risk-transfer}

At inference the prototypes are discarded, and the deployed classifier is the
reconnected network $N_2\circ N_1$.  Writing
\begin{equation}
    \ell_i(z) := \Lcls\bigl(N_2(z),\,i\bigr),
    \qquad z\in\R^k,
    \label{eq:head-loss}
\end{equation}
for the loss of the head on an embedding of class $i$, the population risk of
the deployed classifier under balanced classes is
\begin{equation}
    R(\theta_1,\theta_2)
      := \E\bigl[\ell_Y\bigl(N_1(X)\bigr)\bigr]
       = \frac1n\sum_{i=1}^n \E_{z\sim P_i}\,\ell_i(z),
    \label{eq:deployed-risk}
\end{equation}
where $P_i$ denotes the law of $N_1(X)$ conditioned on $Y=i$ (an italic $P$,
not to be confused with the repulsion penalty $\mathcal{P}$), with mean
$\mu_i$ and covariance $\Sigma_i$.  The classification term of
\eqref{eq:combinedLoss} is the same average taken over the sampling
distributions instead,
\begin{equation}
    \Lsmp
      := \frac1n\sum_{i=1}^n \E_{z\sim Q_i}\,\ell_i(z),
    \qquad
    Q_i := \mathcal{N}\bigl(s_i,\tilde\Sigma_i\bigr).
    \label{eq:sampled-risk}
\end{equation}
Training $N_2$ on \eqref{eq:sampled-risk} is vicinal risk
minimization~\cite{chapelle2000vicinal} at the lifting interface: the
empirical embeddings are replaced by Gaussian vicinities around the
prototypes.  The question is how far $\Lsmp$ can be from the risk $R$ that
actually matters at deployment.

We measure the discrepancy between the embedding law and the sampling law in
the $1$-Wasserstein distance $W_1$~\cite{villani2009optimal} and, between
covariance matrices, in the Bures distance
\begin{equation}
    \BW(A,B)
      := \Bigl(\operatorname{tr} A + \operatorname{tr} B
         - 2\operatorname{tr}\bigl[\bigl(B^{1/2}AB^{1/2}\bigr)^{1/2}\bigr]\Bigr)^{1/2},
    \label{eq:bures}
\end{equation}
which is a metric on positive semi-definite matrices~\citep{bhatia2019bures}
and is exactly the covariance part of the $2$-Wasserstein distance between
Gaussians~\citep{givens1984class}.

\begin{assumption}
\label{ass:lipschitz}
Each head loss $\ell_i:\R^k\to\R$ is $G$-Lipschitz:
$|\ell_i(z)-\ell_i(z')|\le G\|z-z'\|_2$ for all $z,z'\in\R^k$ and all
$i\in[n]$.
\end{assumption}

\begin{remark}[The assumption is exact for our architecture]
\label{rem:lipschitz-ours}
All experiments in \cref{sec:experiments} use a single linear head,
$N_2(z)=W_2z+b$, with the cross-entropy loss.  Then
$\nabla_z\ell_i(z)=W_2^\top\bigl(p(z)-e_i\bigr)$ with
$p(z)=\operatorname{softmax}(W_2z+b)$, and since
$\|p-e_i\|_2\le\sqrt2$ for any probability vector $p$,
\cref{ass:lipschitz} holds globally with $G=\sqrt2\,\|W_2\|_2$.  The weight
decay applied to $N_2$ therefore directly controls the constant in the bound
below.
\end{remark}

\begin{theorem}[Interface risk transfer]
\label{thm:risk-transfer}
Under Assumption~\ref{ass:lipschitz} and balanced classes,
\begin{equation}
    \bigl|R-\Lsmp\bigr|
      \;\le\; \frac{G}{n}\sum_{i=1}^n W_1\bigl(P_i,Q_i\bigr),
    \label{eq:risk-transfer}
\end{equation}
and for every class $i$, if $P_i$ has finite second moment,
\begin{equation}
    W_1\bigl(P_i,Q_i\bigr)
      \;\le\;
      \underbrace{W_1\bigl(P_i,\mathcal{N}(\mu_i,\Sigma_i)\bigr)}
        _{\textnormal{non-Gaussianity }\Delta_i}
      \;+\;
      \Bigl(\underbrace{\|\mu_i-s_i\|_2^2}_{\textnormal{mean mismatch}}
      +\underbrace{\BW^2\bigl(\Sigma_i,\tilde\Sigma_i\bigr)}
        _{\textnormal{covariance mismatch}}\Bigr)^{1/2}.
    \label{eq:mismatch-decomposition}
\end{equation}
\end{theorem}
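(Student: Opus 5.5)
The first inequality \eqref{eq:risk-transfer} follows from Kantorovich--Rubinstein duality. By \eqref{eq:deployed-risk} and \eqref{eq:sampled-risk},
\[
R-\Lsmp=\frac1n\sum_{i=1}^n\bigl(\E_{P_i}\ell_i-\E_{Q_i}\ell_i\bigr),
\]
so the triangle inequality reduces the claim to bounding each summand. Under Assumption~\ref{ass:lipschitz}, $\ell_i/G$ is $1$-Lipschitz. Duality gives $W_1(P_i,Q_i)=\sup_{\mathrm{Lip}(f)\le1}\bigl|\E_{P_i}f-\E_{Q_i}f\bigr|$, hence $\bigl|\E_{P_i}\ell_i-\E_{Q_i}\ell_i\bigr|\le G\,W_1(P_i,Q_i)$. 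Averaging over $i$ yields the bound. For these expectations to be finite we need $P_i$ and $Q_i$ to have finite first moments. For $Q_i$ this holds because it is Gaussian. For $P_i$ it is implicit wherever $R$ is finite, and in the second part it follows from the finite second moment assumption. Alternatively, one can avoid duality and argue directly with an arbitrary coupling $\pi$ of $P_i$ and $Q_i$: $\bigl|\E_\pi[\ell_i(Z)-\ell_i(Z')]\bigr|\le G\,\E_\pi\|Z-Z'\|$, and taking the infimum over couplings gives the same bound.

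For \eqref{eq:mismatch-decomposition}, I would insert the intermediate Gaussian $\mathcal N(\mu_i,\Sigma_i)$, which is well defined since $P_i$ has finite second moment. The triangle inequality for $W_1$ gives
\[
W_1(P_i,Q_i)\le W_1\bigl(P_i,\mathcal N(\mu_i,\Sigma_i)\bigr)+W_1\bigl(\mathcal N(\mu_i,\Sigma_i),\mathcal N(s_i,\tilde\Sigma_i)\bigr).
\]
The first term is $\Delta_i$ by definition. For the second, Jensen's (or Cauchy--Schwarz') inequality applied to any coupling shows $W_1\le W_2$. I would then invoke the closed form of the $2$-Wasserstein distance between Gaussians \citep{givens1984class}:
\[
W_2^2\bigl(\mathcal N(\mu_i,\Sigma_i),\mathcal N(s_i,\tilde\Sigma_i)\bigr)=\|\mu_i-s_i\|_2^2+\BW^2(\Sigma_i,\tilde\Sigma_i).
\]
Taking square roots finishes the proof. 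Note that the $\BW$ in \eqref{eq:bures} is symmetric in its arguments, since $\operatorname{tr}(B^{1/2}AB^{1/2})^{1/2}=\operatorname{tr}(A^{1/2}BA^{1/2})^{1/2}$, so the order of $\Sigma_i$ and $\tilde\Sigma_i$ does not matter.

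The only step that is not purely formal is the Gaussian $W_2$ formula when $\Sigma_i$ may be singular, which happens, for example, when the embeddings of a class collapse. I would handle this by citing the Givens--Shortt result, which holds for arbitrary positive semidefinite covariances. If a self-contained argument is preferred, the upper bound suffices here, and it can be obtained constructively. Couple $Z=\mu_i+\Sigma_i^{1/2}\xi$ and $Z'=s_i+\tilde\Sigma_i^{1/2}U\xi$ with a common $\xi\sim\mathcal N(0,I_k)$ and an orthogonal $U$ taken from the polar decomposition that maximizes $\operatorname{tr}(\Sigma_i^{1/2}\tilde\Sigma_i^{1/2}U)$. This maximum equals $\operatorname{tr}\bigl[(\tilde\Sigma_i^{1/2}\Sigma_i\tilde\Sigma_i^{1/2})^{1/2}\bigr]$, and substituting it gives $\E\|Z-Z'\|^2=\|\mu_i-s_i\|^2+\BW^2(\Sigma_i,\tilde\Sigma_i)$. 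This construction works regardless of singularity, because $\tilde\Sigma_i$ is positive definite by \eqref{eq:shrinkage}.
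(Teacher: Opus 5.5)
Your proof is correct and essentially the paper's: a per-class coupling (equivalently, Kantorovich--Rubinstein) bound averaged over classes for the first inequality, and for the second the triangle inequality through $\mathcal{N}(\mu_i,\Sigma_i)$, combined with $W_1\le W_2$ and the Givens--Shortt formula. Your explicit polar-decomposition coupling is a valid self-contained replacement for that citation, and it works for any pair of positive semi-definite covariances, so the positive definiteness of $\tilde\Sigma_i$ is not actually needed there.
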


\begin{proof}
For \eqref{eq:risk-transfer}, fix $i$ and let $\gamma$ be any coupling of
$(P_i,Q_i)$, i.e.\ a joint law of $(z,z')$ with marginals $P_i$ and $Q_i$.
Then
\[
    \bigl|\E_{P_i}\ell_i-\E_{Q_i}\ell_i\bigr|
      =\bigl|\E_{(z,z')\sim\gamma}\bigl[\ell_i(z)-\ell_i(z')\bigr]\bigr|
      \le G\,\E_{\gamma}\|z-z'\|_2 ,
\]
by \cref{ass:lipschitz}.  Taking the infimum over couplings gives
$|\E_{P_i}\ell_i-\E_{Q_i}\ell_i|\le G\,W_1(P_i,Q_i)$
\citep{villani2009optimal}. Averaging over $i$ and using the triangle
inequality for the average yields \eqref{eq:risk-transfer}.

For \eqref{eq:mismatch-decomposition}, insert the moment-matched Gaussian
$\Gamma_i:=\mathcal{N}(\mu_i,\Sigma_i)$ and use that $W_1$ is a metric:
$W_1(P_i,Q_i)\le W_1(P_i,\Gamma_i)+W_1(\Gamma_i,Q_i)$.  By Jensen's
inequality $W_1\le W_2$, and for Gaussian measures the $W_2$ distance is
available in closed form~\citep{givens1984class}:
\[
    W_2^2\bigl(\mathcal{N}(\mu_i,\Sigma_i),\mathcal{N}(s_i,\tilde\Sigma_i)\bigr)
      =\|\mu_i-s_i\|_2^2+\BW^2\bigl(\Sigma_i,\tilde\Sigma_i\bigr),
\]
with $\BW$ as in \eqref{eq:bures}.  Combining the three displays proves the
claim.
\end{proof}

The argument is elementary.  Its point is not technical difficulty but
accounting: each term of \eqref{eq:mismatch-decomposition} is bounded by a
quantity that the lifted objective controls, as the next corollary makes
precise.

\begin{corollary}[The objective controls its own transfer gap]
\label{cor:controlled}
Let $\Lcon:=\frac{\rho}{2}\,\E\|N_1(X)-s_Y\|_2^2$ denote the population
consensus term of \eqref{eq:combinedLoss}, let
$\tilde\Sigma_i=\hat\Sigma_i+\sigma_0^2I_k$ as in \eqref{eq:shrinkage}, and
let $\bar\Delta:=\frac1n\sum_i\Delta_i$ be the average non-Gaussianity.  Then,
under the assumptions of \cref{thm:risk-transfer},
\begin{equation}
    R \;\le\; \Lsmp
      + G\Bigl[\;\bar\Delta
      \;+\; \sqrt{\tfrac{2}{\rho}\Lcon}
      \;+\; \sqrt{k}\,\sigma_0
      \;+\; \tfrac{1}{2\sigma_0}\cdot\tfrac1n\textstyle\sum_i
            \bigl\|\hat\Sigma_i-\Sigma_i\bigr\|_F\;\Bigr].
    \label{eq:controlled-bound}
\end{equation}
Moreover, the non-Gaussianity is itself controlled by the consensus term,
$\bar\Delta\le2\sqrt{\tfrac{2}{\rho}\Lcon}$, and
\begin{equation}
    R \;\le\; \Lsmp
      + G\Bigl[\;\sqrt{5}\,\sqrt{\tfrac{2}{\rho}\Lcon}
      \;+\; \sqrt{k}\,\sigma_0
      \;+\; \tfrac{1}{2\sigma_0}\cdot\tfrac1n\textstyle\sum_i
            \bigl\|\hat\Sigma_i-\Sigma_i\bigr\|_F\;\Bigr].
    \label{eq:controlled-bound-full}
\end{equation}
\end{corollary}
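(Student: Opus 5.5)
The plan is to start from \cref{thm:risk-transfer}, replace each of its three mismatch terms by a quantity appearing in \eqref{eq:controlled-bound}, and then average over classes. Combining \eqref{eq:risk-transfer} and \eqref{eq:mismatch-decomposition} with $\sqrt{a+b}\le\sqrt a+\sqrt b$ gives
\[
R\le\Lsmp+\frac{G}{n}\sum_i\Bigl[\Delta_i+\|\mu_i-s_i\|_2+\BW(\Sigma_i,\tilde\Sigma_i)\Bigr].
\]
Throughout, write $r_i^2:=\E[\|N_1(X)-s_i\|_2^2\mid Y=i]$. By bias–variance, $r_i^2=\|\mu_i-s_i\|_2^2+\operatorname{tr}\Sigma_i$. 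With balanced classes, $\frac1n\sum_i r_i^2=\E\|N_1(X)-s_Y\|_2^2=\tfrac{2}{\rho}\Lcon$.

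\emph{Mean mismatch.} Since $\|\mu_i-s_i\|_2\le r_i$, Jensen (Cauchy–Schwarz over $i$) gives $\frac1n\sum_i\|\mu_i-s_i\|_2\le\bigl(\frac1n\sum_i r_i^2\bigr)^{1/2}=\sqrt{\tfrac2\rho\Lcon}$.

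\emph{Covariance mismatch.} I would use the triangle inequality for the Bures metric, passing through $\Sigma_i+\sigma_0^2I_k$, together with the standard bound $\BW(A,B)\le\|A^{1/2}-B^{1/2}\|_F$. This bound holds because $\BW(A,B)=\min_U\|A^{1/2}-B^{1/2}U\|_F$ over unitaries $U$, and one may take $U=I$.
\begin{itemize}
\item For the first leg, $A=\Sigma_i$ and $B=\Sigma_i+\sigma_0^2I_k$ commute. Eigenvalue-wise, $\sqrt{\lambda+\sigma_0^2}-\sqrt\lambda\le\sigma_0$, so this leg is at most $\sqrt k\,\sigma_0$.
\item For the second leg, both $\Sigma_i+\sigma_0^2I_k$ and $\hat\Sigma_i+\sigma_0^2I_k$ are $\succeq\sigma_0^2I_k$. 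I would invoke the matrix square-root perturbation inequality $\|A^{1/2}-B^{1/2}\|_F\le\|A-B\|_F/(\lambda_{\min}(A)^{1/2}+\lambda_{\min}(B)^{1/2})$ (van Hemmen–Ando). It can be proved directly from the Sylvester equation $A^{1/2}X+XB^{1/2}=A-B$ for $X=A^{1/2}-B^{1/2}$ in the eigenbases, and gives $\|\hat\Sigma_i-\Sigma_i\|_F/(2\sigma_0)$.
\end{itemize}
This step is the main technical point. If the cited inequality seems too heavy, I would prove the eigenbasis entrywise bound $|X_{jk}|=|(A-B)_{jk}|/(\sqrt{a_j}+\sqrt{b_k})$ and sum the squares. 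Substituting the three bounds yields \eqref{eq:controlled-bound}.

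\emph{Non-Gaussianity.} To bound $\bar\Delta$, use the triangle inequality through the Dirac mass at the common mean $\mu_i$: $\Delta_i\le W_1(P_i,\delta_{\mu_i})+W_1(\delta_{\mu_i},\Gamma_i)$. Each term is a first absolute moment about $\mu_i$ of a law with covariance $\Sigma_i$, so each is at most $\sqrt{\operatorname{tr}\Sigma_i}\le r_i$. Hence $\Delta_i\le 2\sqrt{\operatorname{tr}\Sigma_i}$, and averaging as before gives $\bar\Delta\le2\sqrt{\tfrac2\rho\Lcon}$.

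\emph{Sharp constant.} For \eqref{eq:controlled-bound-full} I would not simply add $2+1$, which would give the constant $3$. Instead I keep the per-class sum $2\sqrt{\operatorname{tr}\Sigma_i}+\|\mu_i-s_i\|_2$. Cauchy–Schwarz with the vector $(2,1)$ bounds this by $\sqrt5\,(\operatorname{tr}\Sigma_i+\|\mu_i-s_i\|_2^2)^{1/2}=\sqrt5\,r_i$. A final Jensen step over $i$ turns $\frac1n\sum_i\sqrt5\,r_i$ into $\sqrt5\sqrt{\tfrac2\rho\Lcon}$, which with the unchanged covariance terms gives \eqref{eq:controlled-bound-full}.
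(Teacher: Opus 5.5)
Your proposal is correct and follows the paper's proof essentially step for step. It uses the same reduction via $\sqrt{a+b}\le\sqrt a+\sqrt b$ and the same Jensen bound on the mean mismatch. It bounds the covariance term the same way: triangle inequality through $\Sigma_i+\sigma_0^2I_k$, the $U=I$ choice in the variational characterization, and an eigenbasis entrywise estimate giving $\tfrac{1}{2\sigma_0}\|\hat\Sigma_i-\Sigma_i\|_F$. Your Sylvester-equation derivation is only a repackaging of the paper's computation $(\sqrt{\alpha_j}-\sqrt{\beta_l})C_{jl}=(\alpha_j-\beta_l)C_{jl}/(\sqrt{\alpha_j}+\sqrt{\beta_l})$. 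Finally, it obtains the $\sqrt5$ constant from the Dirac-mass bound $\Delta_i\le 2(\operatorname{tr}\Sigma_i)^{1/2}$ together with Cauchy--Schwarz against $(2,1)$, exactly as the paper does.
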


\begin{proof}
Apply \eqref{eq:risk-transfer}, \eqref{eq:mismatch-decomposition} and
$\sqrt{a^2+b^2}\le a+b$ (valid for $a,b\ge0$) to each class:
\begin{equation*}
    R-\Lsmp\;\le\;\frac{G}{n}\sum_{i=1}^n
      \Bigl[\Delta_i+\|\mu_i-s_i\|_2+\BW\bigl(\Sigma_i,\tilde\Sigma_i\bigr)\Bigr].
\end{equation*}

\emph{Mean term.}  By Jensen's inequality applied twice,
\begin{align*}
    \|\mu_i-s_i\|_2&=\bigl\|\E_{P_i}[z]-s_i\bigr\|_2
      \le\bigl(\E_{P_i}\|z-s_i\|_2^2\bigr)^{1/2},
    \\
    \frac1n\sum_i\bigl(\E_{P_i}\|z-s_i\|_2^2\bigr)^{1/2}
      &\le\Bigl(\frac1n\sum_i\E_{P_i}\|z-s_i\|_2^2\Bigr)^{1/2},
\end{align*}
and under balanced classes the bracketed average equals
$\E\|N_1(X)-s_Y\|_2^2=\tfrac{2}{\rho}\Lcon$.

\emph{Non-Gaussianity term.}  Inserting the point mass $\delta_{\mu_i}$, and
using that the only coupling with a point mass is the product coupling,
\[
    \Delta_i\le W_1(P_i,\delta_{\mu_i})+W_1(\delta_{\mu_i},\Gamma_i)
      =\E_{P_i}\|z-\mu_i\|_2+\E_{\Gamma_i}\|z-\mu_i\|_2
      \le2\,(\operatorname{tr}\Sigma_i)^{1/2},
\]
by Jensen's inequality.  Since
$\E_{P_i}\|z-s_i\|_2^2=\|\mu_i-s_i\|_2^2+\operatorname{tr}\Sigma_i$, the
Cauchy--Schwarz inequality in $\R^2$ gives
\[
    \|\mu_i-s_i\|_2+\Delta_i
      \le\|\mu_i-s_i\|_2+2(\operatorname{tr}\Sigma_i)^{1/2}
      \le\sqrt5\,\bigl(\E_{P_i}\|z-s_i\|_2^2\bigr)^{1/2},
\]
and also $\Delta_i\le2(\E_{P_i}\|z-s_i\|_2^2)^{1/2}$.  Averaging over $i$
exactly as for the mean term yields $\bar\Delta\le2\sqrt{2\Lcon/\rho}$ and the
$\sqrt5$ term of \eqref{eq:controlled-bound-full}.

\emph{Covariance term.}  Since $\BW$ is a metric on positive semi-definite
matrices~\citep{bhatia2019bures},
\[
    \BW\bigl(\Sigma_i,\tilde\Sigma_i\bigr)
      \le \BW\bigl(\Sigma_i,\Sigma_i+\sigma_0^2I_k\bigr)
        + \BW\bigl(\Sigma_i+\sigma_0^2I_k,\hat\Sigma_i+\sigma_0^2I_k\bigr).
\]
For the first summand, the variational characterization
$\BW(A,B)=\min_{U\,\mathrm{unitary}}\|A^{1/2}-B^{1/2}U\|_F$
\citep{bhatia2019bures} with $U=I$ gives
$\BW(A,B)\le\|A^{1/2}-B^{1/2}\|_F$; because $\Sigma_i$ and
$\Sigma_i+\sigma_0^2I_k$ commute, this Frobenius norm equals
$\bigl(\sum_{j=1}^k(\sqrt{\lambda_j+\sigma_0^2}-\sqrt{\lambda_j})^2\bigr)^{1/2}
\le\sqrt{k}\,\sigma_0$,
where $\lambda_j\ge0$ are the eigenvalues of $\Sigma_i$ and we used
$\sqrt{\lambda+\sigma_0^2}-\sqrt{\lambda}\le\sigma_0$.

For the second summand, let $A:=\Sigma_i+\sigma_0^2I_k$ and
$B:=\hat\Sigma_i+\sigma_0^2I_k$; both dominate $\sigma_0^2I_k$, since
$\hat\Sigma_i$ is a sample covariance and hence positive semi-definite.
Diagonalize $A=U\diag(\alpha)U^\top$ and $B=V\diag(\beta)V^\top$ with $U,V$
orthogonal and $\alpha_j,\beta_l\ge\sigma_0^2$, and set $C:=U^\top V$.  Then
$U^\top(A-B)V$ and $U^\top(A^{1/2}-B^{1/2})V$ have entries
$(\alpha_j-\beta_l)C_{jl}$ and $(\sqrt{\alpha_j}-\sqrt{\beta_l})C_{jl}$,
respectively.  Since
$\sqrt{\alpha}-\sqrt{\beta}=(\alpha-\beta)/(\sqrt{\alpha}+\sqrt{\beta})$ with
$\sqrt{\alpha}+\sqrt{\beta}\ge2\sigma_0$, and the Frobenius norm is invariant
under orthogonal transformations,
\[
    \|A^{1/2}-B^{1/2}\|_F^2
      =\sum_{j,l}\bigl(\sqrt{\alpha_j}-\sqrt{\beta_l}\bigr)^2C_{jl}^2
      \le\frac{1}{4\sigma_0^2}\sum_{j,l}(\alpha_j-\beta_l)^2C_{jl}^2
      =\frac{1}{4\sigma_0^2}\|A-B\|_F^2 .
\]
With $A-B=\Sigma_i-\hat\Sigma_i$ and $\BW\le\|A^{1/2}-B^{1/2}\|_F$ as above,
the second summand is at most $\tfrac{1}{2\sigma_0}\|\hat\Sigma_i-\Sigma_i\|_F$.

Collecting the mean, non-Gaussianity and covariance terms yields
\eqref{eq:controlled-bound}; replacing the first two by their joint $\sqrt5$
bound yields \eqref{eq:controlled-bound-full}.
\end{proof}

Every term on the right of \eqref{eq:controlled-bound} is either part of the
lifted objective or explicitly managed by the algorithm:
\begin{itemize}
  \item the \emph{sampled loss} $\Lsmp$ is the classification term of
        \eqref{eq:combinedLoss}, minimized directly;
  \item the \emph{mean mismatch} $\sqrt{2\Lcon/\rho}$ is the root-mean-square
        distance of the embeddings from their prototypes.  It does not depend
        on $\rho$ directly, but it is exactly what the consensus term
        penalizes, and the annealing schedule \eqref{eq:anneal}, which raises
        $\rho$ over training, increases the weight the objective places on it
        as training proceeds;
  \item the \emph{non-Gaussianity} $\bar\Delta$ is the modeling residual of
        assumption \eqref{eq:classAssumption}.  It is not optimized directly,
        but it is bounded by the same root-mean-square distance,
        $\bar\Delta\le2\sqrt{2\Lcon/\rho}$, so concentrating the embeddings
        around their prototypes shrinks it whatever their shape; this is what
        \eqref{eq:controlled-bound-full} records.  The bound
        \eqref{eq:controlled-bound} is the sharper of the two when the
        class-conditional laws are close to Gaussian, as the success of
        Gaussian models of penultimate features
        suggests~\citep{lee2018simple}, and within-class variability is known
        to collapse late in training~\citep{papyan2020prevalence}, which drives
        $\bar\Delta$ to zero;
  \item the \emph{covariance estimation error}
        $\|\hat\Sigma_i-\Sigma_i\|_F$ is the error of the per-epoch covariance
        refresh of \cref{alg:lifted}.  Covariance concentration bounds such as
        \eqref{eq:vershynin} control it when the refreshed embeddings are
        treated as independent draws from $P_i$ (a bound stated in operator
        norm converts via $\|\cdot\|_F\le\sqrt{k}\,\|\cdot\|_2$).  Because
        $N_1$ is fitted to the same data, this is an idealization rather than
        a guarantee, but it ties the sample-size discussion of
        \cref{sec:dimension} to a term of the risk bound;
  \item the \emph{floor bias} $\sqrt{k}\,\sigma_0$ is the price paid for the
        variance floor --- the sampling law deliberately over-disperses
        relative to the embedding law.  It trades off against the estimation
        term, which the floor shrinks: with
        $\bar\varepsilon:=\frac1n\sum_i\|\hat\Sigma_i-\Sigma_i\|_F$, the sum
        $\sqrt{k}\,\sigma_0+\bar\varepsilon/(2\sigma_0)$ is minimized at
        $\sigma_0^\star=\bigl(\bar\varepsilon/(2\sqrt{k})\bigr)^{1/2}$, where it
        equals $\sqrt2\,k^{1/4}\bar\varepsilon^{1/2}$.  The bound itself thus
        favors an intermediate floor.
\end{itemize}
Two caveats apply.  The bound controls the cross-entropy risk, which bounds
the misclassification probability only up to the factor $1/\log2$, so it
speaks to accuracy indirectly.  And $\Lcon$ and $\Sigma_i$ are population
quantities: the bound holds deterministically for any parameters, and the
question of generalization is moved into the gap between the population and
training values of the consensus term.  With these caveats, discarding the
prototypes at inference is sound because the objective that trained $N_2$
was, up to these audited terms, the deployed risk itself.

\subsection{Sampling is an Explicit Curvature Regularizer}
\label{sec:sensitivity}

Corollary~\ref{cor:controlled} explains why reconnection does not degrade the
deployed risk. However, it does not explain why lifted training can \emph{gain}
accuracy over end-to-end training.  The gain has a separate source: the
sampled loss evaluates the head on entire neighborhoods of the prototypes
rather than at isolated points.  For our head this can only increase the loss
relative to the prototypes themselves, and to second order the increase is an
explicit, non-negative curvature penalty.

\begin{proposition}[Second-order form of the sampled loss]
\label{prop:smoothing}
Let $\ell:\R^k\to\R$ be three times continuously differentiable with
$\bigl|\nabla^3\ell(z)[u,u,u]\bigr|\le M\|u\|_2^3$ for all $z,u\in\R^k$.
Then for $Q=\mathcal{N}(s,\Sigma)$,
\begin{equation}
    \Bigl|\,\E_{z\sim Q}\,\ell(z)-\ell(s)
      -\tfrac12\operatorname{tr}\bigl(\Sigma\,\nabla^2\ell(s)\bigr)\Bigr|
    \;\le\; \tfrac{M}{6}\,3^{3/4}\,(\operatorname{tr}\Sigma)^{3/2}.
    \label{eq:smoothing-expansion}
\end{equation}
For the linear head with cross-entropy (\cref{rem:lipschitz-ours}), each
$\ell_i$ is convex, its Hessian is $\nabla^2\ell_i(z)=W_2^\top H(z)W_2$ with
$H(z)=\diag\bigl(p(z)\bigr)-p(z)p(z)^\top\succeq0$ for every $z$, and the
hypothesis above holds with $M=\|W_2\|_2^3/\sqrt2$.  Consequently
\begin{equation}
    \E_{z\sim Q_i}\,\ell_i(z)\;\ge\;\ell_i(s_i)
    \label{eq:jensen-prototype}
\end{equation}
exactly, and
\begin{equation}
    \E_{z\sim Q_i}\,\ell_i(z)
      \;=\;\ell_i(s_i)
      \;+\;\underbrace{\tfrac12\operatorname{tr}\bigl(\tilde\Sigma_i\,
            W_2^\top H(s_i)W_2\bigr)}_{\ge\,0}
      \;+\;r_i,
    \qquad
    |r_i|\le\tfrac{3^{3/4}}{6\sqrt2}\,\|W_2\|_2^3\,
      (\operatorname{tr}\tilde\Sigma_i)^{3/2}.
    \label{eq:sensitivity-penalty}
\end{equation}
\end{proposition}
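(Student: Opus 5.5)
The plan is to Taylor-expand $\ell$ around $s$ to third order with an integral (or Lagrange) remainder, then take expectations under $Q$. Write $z=s+u$ with $u=\Sigma^{1/2}\xi\sim\mathcal{N}(0,\Sigma)$. Taylor's theorem gives
\[
\ell(s+u)=\ell(s)+\nabla\ell(s)^\top u+\tfrac12 u^\top\nabla^2\ell(s)u+\tfrac16\nabla^3\ell(s+\tau u)[u,u,u]
\]
for some $\tau\in(0,1)$, where the integral form is used to avoid measurability issues. Taking expectations kills the linear term because $\E u=0$, and the quadratic term becomes $\tfrac12\operatorname{tr}(\Sigma\nabla^2\ell(s))$. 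The hypothesis bounds the remainder by $\tfrac M6\E\|u\|_2^3$. Integrability is guaranteed because $\ell$ grows at most cubically, so all Gaussian moments involved are finite.

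It remains to bound $\E\|u\|_2^3$. Apply Jensen or Lyapunov: $\E\|u\|^3\le(\E\|u\|^4)^{3/4}$. Diagonalizing $\Sigma$ with eigenvalues $\lambda_j$, we have $\|u\|^2=\sum_j\lambda_j g_j^2$ with $g_j$ i.i.d.\ standard normal, so
\[
\E\|u\|^4=(\operatorname{tr}\Sigma)^2+2\operatorname{tr}(\Sigma^2)\le3(\operatorname{tr}\Sigma)^2 .
\]
This yields $\E\|u\|^3\le3^{3/4}(\operatorname{tr}\Sigma)^{3/2}$, which is exactly \eqref{eq:smoothing-expansion}.

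For the linear head, $\ell_i(z)=\operatorname{lse}(W_2z+b)-(W_2z+b)_i$ is a convex function (log-sum-exp) composed with an affine map. Hence $\nabla^2\ell_i=W_2^\top H W_2$, where $H=\diag(p)-pp^\top$ is the softmax Jacobian, and $H$ is PSD because it is a covariance matrix of the one-hot categorical variable. Convexity plus Jensen gives \eqref{eq:jensen-prototype} directly, since $\E_{Q_i}z=s_i$. Then \eqref{eq:sensitivity-penalty} follows from the general expansion with $\Sigma=\tilde\Sigma_i$, once $M$ is identified. The penalty term is nonnegative as the trace of a product of two PSD matrices.

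The main obstacle is the constant $M=\|W_2\|_2^3/\sqrt2$. Writing $v=W_2u$, we get $\nabla^3\ell_i(z)[u,u,u]=\nabla^3\operatorname{lse}(y)[v,v,v]$. This equals the third cumulant $\kappa_3$ of the random variable $V$ that takes value $v_c$ with probability $p_c$, i.e.\ $\E_p[(V-\bar V)^3]$. We therefore need $|\E(V-\bar V)^3|\le\tfrac1{\sqrt2}\|v\|_2^3$. The first attempt is $|\E X^3|\le\max|X|\cdot\operatorname{Var}X$ with $X=V-\bar V$, combined with bounds on the range of $X$ and on the variance in terms of $\|v\|_2$. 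For instance, $\operatorname{Var}V\le\tfrac14(\max v-\min v)^2\le\tfrac12\|v\|_2^2$, since $(v_a-v_b)^2\le2\|v\|^2$. If this product bound does not reach $1/\sqrt2$, the fallback is to maximize the third central moment of a distribution supported on $\{v_c\}$ over $p$ in the simplex. The extremum should occur for a two-point distribution on $v_a,v_b$ with $v_a=-v_b$, where the explicit two-point cumulant formula $p(1-p)(1-2p)(v_a-v_b)^3$ can be optimized directly. Any loss of a constant factor here would only change the stated $M$, and I would verify it carefully.
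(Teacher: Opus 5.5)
Your proposal is correct and follows the paper's proof essentially step for step: a third-order Taylor expansion whose remainder is bounded via Lyapunov and the Gaussian fourth moment, convexity of log-sum-exp plus Jensen, positive semi-definiteness of $H$, and the third-cumulant representation of $\nabla^3\ell_i$. Your closing hedge is unnecessary, because the ``first attempt'' already closes exactly: $|V-\bar V|\le r:=\max_c v_c-\min_c v_c\le\sqrt2\,\|v\|_2$ together with $\operatorname{Var}V\le r^2/4$ gives $|\E(V-\bar V)^3|\le r^3/4\le\|v\|_2^3/\sqrt2$, which is precisely the paper's argument and yields $M=\|W_2\|_2^3/\sqrt2$ after $\|v\|_2\le\|W_2\|_2\|u\|_2$.
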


\begin{proof}
Write $z=s+\zeta$ with $\zeta\sim\mathcal{N}(0,\Sigma)$.  Third-order Taylor
expansion of $t\mapsto\ell(s+t\zeta)$ with Lagrange remainder gives, for every
$\zeta$,
\[
    \Bigl|\ell(s+\zeta)-\ell(s)-\nabla\ell(s)^\top\zeta
      -\tfrac12\zeta^\top\nabla^2\ell(s)\zeta\Bigr|
    \le\tfrac{M}{6}\|\zeta\|_2^3 .
\]
Taking expectations, $\E[\nabla\ell(s)^\top\zeta]=0$ and
$\E[\tfrac12\zeta^\top\nabla^2\ell(s)\zeta]
=\tfrac12\operatorname{tr}(\Sigma\nabla^2\ell(s))$.  For the remainder,
H\"older's inequality and the Gaussian fourth moment
$\E\|\zeta\|_2^4=(\operatorname{tr}\Sigma)^2+2\operatorname{tr}(\Sigma^2)
\le3(\operatorname{tr}\Sigma)^2$ give
$\E\|\zeta\|_2^3\le(\E\|\zeta\|_2^4)^{3/4}
\le3^{3/4}(\operatorname{tr}\Sigma)^{3/2}$, proving
\eqref{eq:smoothing-expansion}.

For the cross-entropy linear head, $\ell_i(z)=-\log p_i(z)
=\operatorname{LSE}(W_2z+b)-(W_2z+b)_i$ with
$p(z)=\operatorname{softmax}(W_2z+b)$ and $\operatorname{LSE}$ the
log-sum-exp function.  As a convex function of an affine map minus an affine
map, $\ell_i$ is convex, and since $\E_{Q_i}[z]=s_i$, Jensen's inequality
gives \eqref{eq:jensen-prototype}.  The chain rule gives
$\nabla\ell_i(z)=W_2^\top(p(z)-e_i)$ and
$\nabla^2\ell_i(z)=W_2^\top\bigl(\diag(p(z))-p(z)p(z)^\top\bigr)W_2$.
The middle factor $H(z)$ is positive semi-definite: for any $v\in\R^n$,
$v^\top H(z)v=\sum_jp_jv_j^2-(\sum_jp_jv_j)^2\ge0$ by Jensen's inequality,
hence $\operatorname{tr}(\tilde\Sigma_i W_2^\top H(s_i)W_2)\ge0$ since the
trace of a product of two positive semi-definite matrices is non-negative.

For the third derivative, fix $z,u$ and let $w:=W_2u$.  Up to an affine
function of $t$, $t\mapsto\ell_i(z+tu)$ equals
$\log\sum_jp_j(z)e^{tw_j}$, the cumulant generating function of $V:=w_J$
with $J\sim p(z)$.  Hence $\nabla^3\ell_i(z)[u,u,u]=\E[(V-\E V)^3]$.  With
$r:=\max_jw_j-\min_jw_j$ we have $|V-\E V|\le r$ and, by Popoviciu's
inequality, $\operatorname{Var}V\le r^2/4$, so
$|\E[(V-\E V)^3]|\le r^3/4$.  Finally
$r\le\sqrt2\,\|w\|_2\le\sqrt2\,\|W_2\|_2\|u\|_2$, which gives
$M=\|W_2\|_2^3/\sqrt2$.  Equation \eqref{eq:sensitivity-penalty} follows from
\eqref{eq:smoothing-expansion} applied to $\ell_i$ at $s=s_i$,
$\Sigma=\tilde\Sigma_i$.
\end{proof}

Equation~\eqref{eq:sensitivity-penalty} is the class-prototype analogue of the
classical result that training with input noise is, to second order,
equivalent to Tikhonov regularization~\citep{bishop1995training}: the sampled
loss equals the prototype loss plus a penalty on the $\tilde\Sigma_i$-weighted
curvature of the head at the prototype.  Fitting the prototypes constrains
$\ell_i(s_i)$ only; among heads that fit them equally well, the sampled
objective prefers the one whose loss curves least around them, in the
directions of $\tilde\Sigma_i$.  Those are the directions in which the head's
inputs at inference, $N_1(x)=s_i+\delta$ with a nonzero reconnection residual
$\delta$, deviate from the prototype.

Two qualifications apply to this reading.  First, it is a second-order
statement.  The remainder constant in \eqref{eq:sensitivity-penalty} grows
like $\|W_2\|_2^3$, so ranking heads by the curvature term alone is justified
only when the logit perturbation scale
$\|W_2\|_2(\operatorname{tr}\tilde\Sigma_i)^{1/2}$ is small.  Second, curvature
at the prototype is not a global sensitivity measure.  Writing
$\omega_j^\top$ for the rows of $W_2$ and
$\bar\omega:=\sum_jp_j(s_i)\,\omega_j$,
\begin{equation}
    \operatorname{tr}\bigl(W_2^\top H(s_i)W_2\bigr)
      =\sum_{j=1}^n p_j(s_i)\,\|\omega_j-\bar\omega\|_2^2,
    \label{eq:curvature-spread}
\end{equation}
the $p(s_i)$-weighted spread of the class weight vectors.  A head that is
saturated at the prototype, $p(s_i)\approx e_i$, makes this spread small even
when $\|W_2\|_2$, and with it the Lipschitz constant $G$, is large.  Scaling
up $W_2$ in fact drives the curvature penalty to zero exponentially fast.
That regime lies outside the validity of the expansion.  What keeps the head
out of it is the weight decay on $N_2$, which also controls $G$ in
\eqref{eq:controlled-bound}.

\emph{Relation to end-to-end training.}  \Cref{prop:smoothing} compares the
sampled loss with the loss at the prototypes, not with the end-to-end loss
$\E_{P_i}\ell_i$ on the embeddings themselves.  The latter comparison holds
exactly in the idealized case $s_i=\mu_i$, $\hat\Sigma_i=\Sigma_i$.  Then
$Q_i$ is the convolution of $\Gamma_i=\mathcal{N}(\mu_i,\Sigma_i)$ with
$\mathcal{N}(0,\sigma_0^2I_k)$, and convexity of $\ell_i$ gives
\[
    \E_{Q_i}\ell_i
      =\E_{z\sim\Gamma_i}\,\E_{\xi\sim\mathcal{N}(0,\sigma_0^2I_k)}\,
        \ell_i(z+\xi)
      \;\ge\;\E_{\Gamma_i}\ell_i
      \;\ge\;\E_{P_i}\ell_i-G\,\Delta_i ,
\]
where the last step is the coupling argument of \cref{thm:risk-transfer}.
Applying \eqref{eq:smoothing-expansion} to both Gaussians shows that, to
second order, the excess of the sampled loss over $\E_{\Gamma_i}\ell_i$ is
$\tfrac{\sigma_0^2}{2}\operatorname{tr}(W_2^\top H(s_i)W_2)$.  Relative to
end-to-end training, the additional regularization therefore comes from the
floor part of $\tilde\Sigma_i$, not from the whole of $\tilde\Sigma_i$.
Outside the idealized case, the mean and covariance mismatches of
\cref{cor:controlled} enter as well.

It is worth mentioning that the variance regularization via $\sigma_0$ keeps the smoothing alive under collapse.
Since $Q_i$ is also the convolution of $\mathcal{N}(s_i,\sigma_0^2I_k)$ with
$\mathcal{N}(0,\hat\Sigma_i)$, the same convexity argument gives
\begin{equation}
    \E_{Q_i}\ell_i\;\ge\;\E_{\xi\sim\mathcal{N}(0,\sigma_0^2I_k)}\,
      \ell_i(s_i+\xi),
    \label{eq:floor-smoothing}
\end{equation}
and, to second order, since $\tilde\Sigma_i\succeq\sigma_0^2I_k$ and
$W_2^\top H(s_i)W_2\succeq0$,
\begin{equation}
    \tfrac12\operatorname{tr}\bigl(\tilde\Sigma_i W_2^\top H(s_i)W_2\bigr)
      \;\ge\;\tfrac{\sigma_0^2}{2}\,
      \operatorname{tr}\bigl(W_2^\top H(s_i)W_2\bigr).
    \label{eq:floor-regularization}
\end{equation}
Even in the neural-collapse limit $\hat\Sigma_i\to0$, in which the estimated
covariance degenerates, the sampled loss therefore remains an isotropically
smoothed loss at scale $\sigma_0$ rather than reducing to the point loss
$\ell_i(s_i)$.  This resolves the tension noted in \cref{sec:intro}:
within-class collapse makes the covariance \emph{estimate} degenerate, but the
floor converts the degenerate limit into fixed isotropic smoothing rather than
none.  The floor alone does not keep the penalty bounded away from zero:
by \eqref{eq:curvature-spread}, the right side of
\eqref{eq:floor-regularization} vanishes as the head saturates.  The two
safeguards act together: the floor fixes the smoothing scale, and weight decay
on $N_2$ bounds $\|W_2\|_2$ and hence the degree of saturation.

%

\subsection{An Exactly Solvable One-Dimensional Instance}
\label{sec:one-dimensional-mechanism}

The results of \cref{sec:risk-transfer,sec:sensitivity} are stated as bounds
with remainder terms.  This subsection exhibits the selection effect of
\cref{prop:smoothing} exactly, in the smallest setting in which it occurs:
one embedding dimension, two classes, and a one-parameter family of heads.

The example holds the first subnetwork fixed and looks only at the
classification problem that the second subnetwork has to solve.  It is
consequently not a claim that conditioning by itself improves generalization.
What it does show, in closed form, is the following.  Fitting the class
prototypes leaves one degree of freedom of the head completely undetermined,
sampling in a neighborhood of the prototypes removes that ambiguity, and the
solution it picks out is the one whose output is least sensitive to a
perturbation of its input.

\subsubsection{Setup}

Consider balanced binary classification with labels $Y\in\{-1,+1\}$.  Take the
lifting dimension to be $k=1$ and assume that the output of the fixed first
subnetwork is
\begin{equation}
    N_1(X)=Y+\varepsilon,
    \qquad \varepsilon\sim\mathcal{N}(0,\sigma^2),
    \qquad \varepsilon\ \text{independent of}\ Y ,
    \label{eq:toy-embedding}
\end{equation}
so that the two class prototypes are $s_{-}=-1$ and $s_{+}=+1$.  This is the
one-dimensional, homoscedastic instance of the class-conditional model
 \eqref{eq:classAssumption}; the variance floor of \eqref{eq:shrinkage} can be
absorbed into $\sigma^2$.  We use the squared classification loss
$\Lcls(y,i)=\tfrac12\|y-e_i\|_2^2$ and replace the linear head by the nonlinear one-parameter
family
\begin{equation}
    N_2(z;a)=g_a(z):=z+a\,(z^3-z),
    \qquad a\in\R .
    \label{eq:toy-head}
\end{equation}

The point of this family is that the factor $z^3-z$ vanishes at both
prototypes.  Every member therefore reproduces the prototypes exactly,
\begin{equation}
    g_a(1)=1,\qquad g_a(-1)=-1
    \qquad\text{for all }a\in\R ,
    \label{eq:toy-interpolation}
\end{equation}
while the members differ in how they behave \emph{near} the prototypes, since
\begin{equation}
    g_a'(z)=1+a\,(3z^2-1),
    \qquad\text{so}\qquad
    g_a'(\pm1)=1+2a .
    \label{eq:toy-derivative}
\end{equation}
The parameter $a$ is thus invisible to any criterion that only inspects the
prototypes, and it is exactly the local sensitivity of the head.

To separate "fitting the prototypes" from "fitting their neighborhoods", define
\begin{align}
    L_{\mathrm{point}}(a)
      &:=\E_Y\bigl[(g_a(Y)-Y)^2\bigr],
      \label{eq:toy-point-loss}\\
    L_{\mathrm{sample}}(a)
      &:=\E_{Y,\varepsilon}\bigl[(g_a(Y+\varepsilon)-Y)^2\bigr].
      \label{eq:toy-sampled-loss}
\end{align}
The first is the zero-variance limit $\sigma\downarrow0$, in which $N_2$ sees
only the two prototypes.
The second is the population version of the classification term in
\eqref{eq:combinedLoss}--\eqref{eq:reparam}.

\subsubsection{The Selection Result}

\begin{proposition}[Sampling determines the local sensitivity]
\label{prop:toy-noise-selection}
Assume \eqref{eq:toy-embedding}--\eqref{eq:toy-head}.  Then
\begin{enumerate}
  \item[(i)] $L_{\mathrm{point}}(a)=0$ for every $a\in\R$; the prototype loss
        does not identify $a$.
  \item[(ii)] For every $\sigma>0$ the sampled loss is the strictly convex
        quadratic
        \begin{equation}
            L_{\mathrm{sample}}(a)
              = \sigma^2
                +2a\,(2\sigma^2+3\sigma^4)
                +a^2(4\sigma^2+39\sigma^4+15\sigma^6),
            \label{eq:toy-sampled-expanded}
        \end{equation}
        with the unique minimizer
        \begin{equation}
            a^\star(\sigma)
              =-\frac{2+3\sigma^2}{4+39\sigma^2+15\sigma^4}.
            \label{eq:toy-optimal-a}
        \end{equation}
  \item[(iii)] The sensitivity of the selected head at the prototypes is
        \begin{equation}
            g_{a^\star}'(\pm1)
              =\frac{33\sigma^2+15\sigma^4}{4+39\sigma^2+15\sigma^4}
              \ \in(0,1),
            \qquad
            g_{a^\star}'(\pm1)=\tfrac{33}{4}\sigma^2+O(\sigma^4)
            \ \longrightarrow\ 0
            \quad\text{as }\sigma\downarrow0 .
            \label{eq:toy-flatness}
        \end{equation}
\end{enumerate}
In words: fitting the prototypes leaves the sensitivity of the head
undetermined, whereas sampling any neighborhood of them fixes it, and in the
small-noise limit the value selected is the one that makes the head stationary
at both prototypes.
\end{proposition}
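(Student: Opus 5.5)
The plan is to reduce everything to Gaussian moments by substituting the explicit form of $g_a$ and using $Y^2=1$.

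\emph{Part (i).} This follows directly from \eqref{eq:toy-interpolation}. Since $g_a(Y)=Y$ for both $Y=\pm1$, the integrand of \eqref{eq:toy-point-loss} vanishes identically in $a$.

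\emph{Expanding the residual.} For (ii), I will write $z=Y+\varepsilon$ and expand
\[
(Y+\varepsilon)^3-(Y+\varepsilon)=Y^3-Y+(3Y^2-1)\varepsilon+3Y\varepsilon^2+\varepsilon^3 .
\]
Using $Y^3=Y$ and $Y^2=1$, this is $h:=2\varepsilon+3Y\varepsilon^2+\varepsilon^3$. The residual is therefore
\[
g_a(Y+\varepsilon)-Y=\varepsilon+a\,h ,
\]
which is affine in $a$. Hence
\[
L_{\mathrm{sample}}(a)=\E[\varepsilon^2]+2a\,\E[\varepsilon h]+a^2\,\E[h^2].
\]

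\emph{Computing the three moments.} I will use independence of $Y$ and $\varepsilon$, the vanishing of odd Gaussian moments, and $\E\varepsilon^2=\sigma^2$, $\E\varepsilon^4=3\sigma^4$, $\E\varepsilon^6=15\sigma^6$.
\begin{itemize}
\item The linear coefficient is $\E[\varepsilon h]=2\sigma^2+3\sigma^4$. The term $3Y\varepsilon^3$ has zero mean because it is odd in $\varepsilon$ and $\E Y=0$.
\item For $\E[h^2]$, the cross term $6Y\varepsilon^2(2\varepsilon+\varepsilon^3)$ has zero mean for the same reasons.
\item The remaining pieces give
\[
\E\bigl[(2\varepsilon+\varepsilon^3)^2\bigr]=4\sigma^2+12\sigma^4+15\sigma^6,
\qquad
\E[9\varepsilon^4]=27\sigma^4 .
\]
\end{itemize}
Summing yields the coefficient $4\sigma^2+39\sigma^4+15\sigma^6$, which is strictly positive for $\sigma>0$. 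This gives strict convexity and \eqref{eq:toy-sampled-expanded}. Setting the derivative to zero gives $a^\star=-\E[\varepsilon h]/\E[h^2]$, and cancelling one factor $\sigma^2$ gives \eqref{eq:toy-optimal-a}.

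\emph{Part (iii).} By \eqref{eq:toy-derivative}, $g_{a^\star}'(\pm1)=1+2a^\star$. Putting this over the common denominator gives numerator
\[
4+39\sigma^2+15\sigma^4-4-6\sigma^2=33\sigma^2+15\sigma^4 .
\]
This is positive, and it is smaller than the denominator by exactly $4+6\sigma^2>0$, so the value lies in $(0,1)$. The expansion $\tfrac{33}{4}\sigma^2+O(\sigma^4)$ follows by dividing by a denominator equal to $4+O(\sigma^2)$.

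The only step needing care is the bookkeeping in $\E[h^2]$, in particular not dropping the $9\varepsilon^4$ contribution and correctly discarding the $Y$-odd cross terms. Beyond that, the argument is routine algebra.
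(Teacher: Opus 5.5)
Your proposal is correct and follows essentially the same route as the paper. Both arguments reduce $Z^3-Z$ via $Y^2=1$ to $2\varepsilon+3Y\varepsilon^2+\varepsilon^3$, write the residual as $\varepsilon+a\,h$ (affine in $a$), evaluate $\E[\varepsilon h]=2\sigma^2+3\sigma^4$ and $\E[h^2]=4\sigma^2+39\sigma^4+15\sigma^6$ from the Gaussian moments and $\E Y=0$, and then read off $a^\star$ and $g_{a^\star}'(\pm1)=1+2a^\star$.
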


\begin{proof}
Part (i) is immediate from \eqref{eq:toy-interpolation}.

For (ii), put $q:=\sigma^2$ and $Z:=Y+\varepsilon$.  Expanding the cube and
using $Y^2=1$, hence $Y^3=Y$,
\begin{align}
    Z^3-Z
      &=(Y+\varepsilon)^3-(Y+\varepsilon) \nonumber\\
      &=Y^3+3Y^2\varepsilon+3Y\varepsilon^2+\varepsilon^3-Y-\varepsilon
        \nonumber\\
      &=2\varepsilon+3Y\varepsilon^2+\varepsilon^3 ,
    \label{eq:toy-cubic-residual}
\end{align}
so that the residual of the head is
\begin{equation}
    g_a(Z)-Y
      =\varepsilon+a\,U,
    \qquad
    U:=2\varepsilon+3Y\varepsilon^2+\varepsilon^3 .
    \label{eq:toy-total-residual}
\end{equation}
Because $\varepsilon$ is centred Gaussian and independent of $Y$, and because
$\E[Y]=0$ by balance, every monomial containing an odd power of $\varepsilon$
or an odd power of $Y$ has zero expectation.  With the Gaussian moments
\begin{equation}
    \E[\varepsilon^2]=q,\qquad
    \E[\varepsilon^4]=3q^2,\qquad
    \E[\varepsilon^6]=15q^3,
\end{equation}
the two remaining expectations are
\begin{align}
    \E[\varepsilon U]
      &=\E\bigl[2\varepsilon^2+3Y\varepsilon^3+\varepsilon^4\bigr]
       =2q+3q^2,\\
    \E[U^2]
      &=\E\bigl[4\varepsilon^2+12Y\varepsilon^3
                +(9Y^2+4)\varepsilon^4+6Y\varepsilon^5+\varepsilon^6\bigr]
       =4q+39q^2+15q^3 .
\end{align}
Squaring \eqref{eq:toy-total-residual} and taking expectations gives
$L_{\mathrm{sample}}(a)=q+2a(2q+3q^2)+a^2(4q+39q^2+15q^3)$, which is
\eqref{eq:toy-sampled-expanded}.  The coefficient of $a^2$ is
$\E[U^2]>0$ for every $q>0$, so the quadratic is strictly convex and its
stationary point is its unique minimizer:
\begin{equation}
    a^\star
      =-\frac{\E[\varepsilon U]}{\E[U^2]}
      =-\frac{2q+3q^2}{4q+39q^2+15q^3}
      =-\frac{2+3q}{4+39q+15q^2},
\end{equation}
which is \eqref{eq:toy-optimal-a}.  Part (iii) follows from
\eqref{eq:toy-derivative}:
\begin{equation}
    g_{a^\star}'(\pm1)=1+2a^\star
      =\frac{(4+39q+15q^2)-(4+6q)}{4+39q+15q^2}
      =\frac{33q+15q^2}{4+39q+15q^2},
\end{equation}
which lies in $(0,1)$ for every $q>0$ and equals $\tfrac{33}{4}q+O(q^2)$ as
$q\downarrow0$.  Substituting $q=\sigma^2$ completes the proof.
\end{proof}

\subsubsection{Interpretation}

The proposition separates two things that the full algorithm does at once.
Identity \eqref{eq:toy-interpolation} says that classifying the prototypes
perfectly puts no constraint at all on how $N_2$ responds to the \emph{actual}
output of $N_1$, which is a noisy version of a prototype.  The expectation in
\eqref{eq:toy-sampled-loss} supplies exactly the missing constraint.

As $\sigma\downarrow0$ the minimizer tends to $a^\star=-1/2$, that is, to
\begin{equation}
    g_{-1/2}(z)=\tfrac32z-\tfrac12z^3,
    \qquad g_{-1/2}'(\pm1)=0 .
\end{equation}
This is \cref{prop:smoothing} made exact: expanding
\eqref{eq:toy-sampled-expanded} to leading order gives
$L_{\mathrm{sample}}(a)=\sigma^2(1+2a)^2+O(\sigma^4)
=\sigma^2\,g_a'(\pm1)^2+O(\sigma^4)$, which is precisely the sensitivity
penalty $\tfrac12\operatorname{tr}(\Sigma\nabla^2\ell(s))$ of
\eqref{eq:smoothing-expansion} in this model (the prototype loss vanishes by
\eqref{eq:toy-interpolation}), and its minimizer is the flattest head.  The
higher-order terms in \eqref{eq:toy-sampled-expanded} are the remainder of
\eqref{eq:smoothing-expansion}, here available in closed form.
The reason this matters is visible in the response to a small deterministic
reconnection error $\delta$.  Setting $\varepsilon=\delta$ in
\eqref{eq:toy-total-residual},
\begin{equation}
    g_a(Y+\delta)-Y
      =(1+2a)\,\delta+3aY\,\delta^2+a\,\delta^3 .
    \label{eq:toy-local-error}
\end{equation}
The identity head $a=0$ makes an error of order $\delta$; the limiting sampled
solution $a=-1/2$ cancels the linear term and makes an error of order
$\delta^2$, hence a squared loss of order $\delta^4$ rather than $\delta^2$.
This is an exact toy instance of feature-space noise acting as a sensitivity
regularizer for the second subnetwork.

\begin{remark}[The selected head is matched to the noise level]
\label{rem:toy-matched}
Sampling does not always favour a flat head.  By \eqref{eq:toy-optimal-a},
$a^\star(\sigma)\to0$ as $\sigma\to\infty$, so at large noise the sampled
objective returns to the identity head, and $g_{a^\star}'(\pm1)$ increases
monotonically from $0$ to $1$ as $\sigma$ grows.  The mechanism selects the
head matched to the noise level, and flatness is the small-noise end of that
family rather than the target of the objective.
\end{remark}

\begin{remark}[Scope of the conclusion]
\label{rem:toy-monotonicity}
The conclusion is deliberately local, for two reasons.

First, it is not a statement about clean accuracy.  The head $g_a$ is odd, so
whenever it is increasing on the interval that carries the data, its decision
boundary is $z=0$ and its accuracy equals that of the identity head.  By
\eqref{eq:toy-derivative}, however, $g_a$ with $a<0$ is increasing only on
$|z|\le\sqrt{(1-a)/(-3a)}$ and changes sign again at $|z|=\sqrt{1-1/a}$.  For
the limiting head $a=-1/2$ these radii are $1$ and $\sqrt3$: a sample with
$|Z|>\sqrt3$ is misclassified even though it is on the correct side of the
origin.  Since $Z$ is Gaussian, this happens with positive probability, and at
moderate noise it is not negligible --- \cref{sec:toy-numerics} measures a loss
of roughly seven accuracy points at $\sigma=0.5$ for $a=-1/2$.  The
$\sigma$-matched head $a^\star(\sigma)$ is far milder, because $|a^\star|$
shrinks as $\sigma$ grows (\cref{rem:toy-matched}); its accuracy stays within
$0.15$ points of the identity for $\sigma\le0.5$ and within $0.6$ points over
the whole range tested.

Second, the cubic should not be extrapolated to large $|z|$, where its
behavior is not representative of a practical classifier.
\end{remark}

What \cref{prop:toy-noise-selection} does establish is a concrete mechanism by
which the \emph{changed} lifted objective --- rather than faster optimization
of the same objective --- selects a different and locally more robust solution.
The experiments below test whether this mechanism mediates the accuracy gains
of \cref{sec:results}.

\subsubsection{Illustration and Numerical Verification}
\label{sec:toy-numerics}

The proposition can be checked without training a network.  On a grid of noise
levels $\sigma$ we draw $n=2\cdot10^5$ samples $(Y,Z)$ from
\eqref{eq:toy-embedding} and minimize the empirical version of
\eqref{eq:toy-sampled-loss}.  Because that empirical loss is again quadratic in
$a$, its minimizer is available in closed form,
\begin{equation}
    \hat a
      = -\frac{\sum_i (Z_i-Y_i)\,(Z_i^3-Z_i)}{\sum_i (Z_i^3-Z_i)^2},
    \label{eq:toy-empirical-minimizer}
\end{equation}
which we cross-check against a derivative-free line search.  Each configuration
is repeated over $25$ seeds; the reported error bars are standard deviations
across seeds.  A held-out sample of the same size is used for the reported
losses and accuracies.

Across $\sigma\in[0.02,1]$ the estimate $\hat a$ agrees with
\eqref{eq:toy-optimal-a} and the measured sensitivity $|g_{\hat a}'(1)|$ agrees
with \eqref{eq:toy-flatness} to within one seed standard deviation; the
empirical loss agrees with the closed form \eqref{eq:toy-sampled-expanded} to
within Monte-Carlo error.  \Cref{fig:toy-selection} shows both comparisons, and
\cref{fig:toy-perturbation} shows the response to a bounded reconnection error.

\begin{figure}[t]
  \centering
  \includegraphics[width=\linewidth]{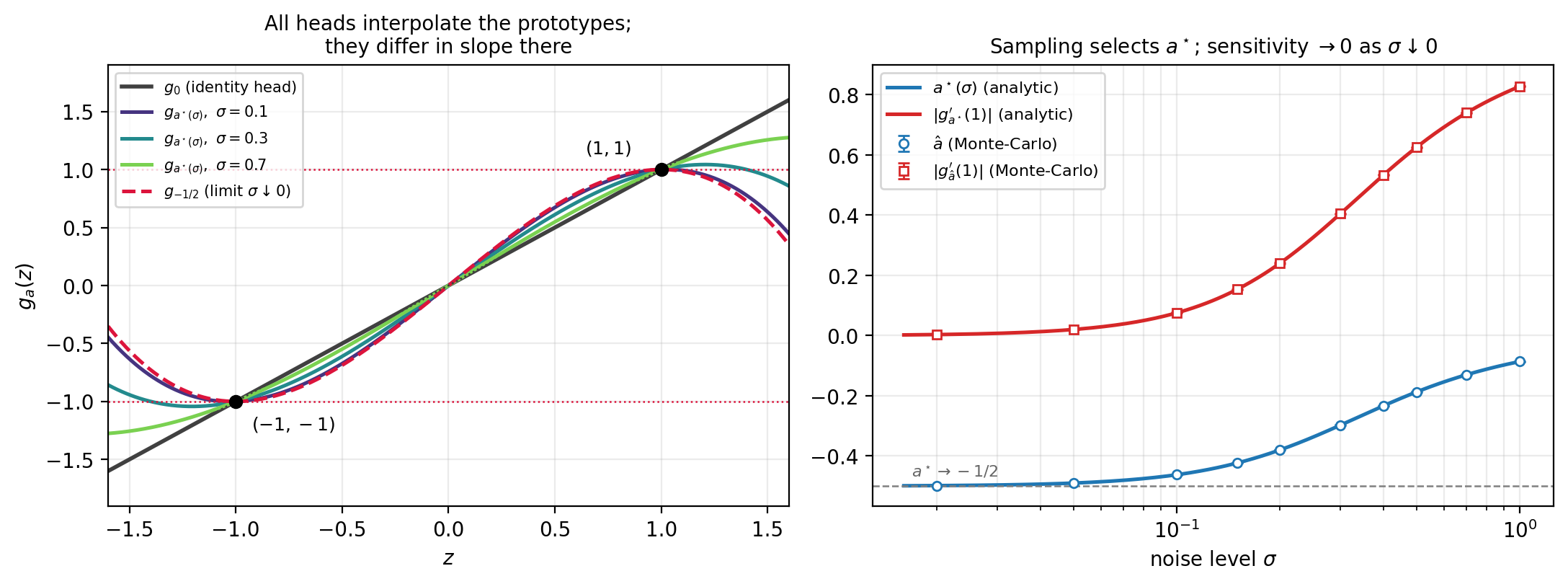}
  \caption{%
    \textbf{Left:} the identity head $g_0$, the sampled optima
    $g_{a^\star(\sigma)}$ for $\sigma\in\{0.1,0.3,0.7\}$, and the small-noise
    limit $g_{-1/2}$.  All curves pass through $(-1,-1)$ and $(1,1)$; the
    dotted lines are the tangents there, which are the only thing that
    distinguishes the family.
    \textbf{Right:} Monte-Carlo estimates of $a^\star(\sigma)$ and of
    $|g_{a^\star}'(1)|$ against the closed forms
    \eqref{eq:toy-optimal-a} and \eqref{eq:toy-flatness}.  Markers are means
    over $25$ seeds and error bars are one standard deviation; they are smaller
    than the markers at most noise levels.}
  \label{fig:toy-selection}
\end{figure}

\begin{figure}[t]
  \centering
  \includegraphics[width=\linewidth]{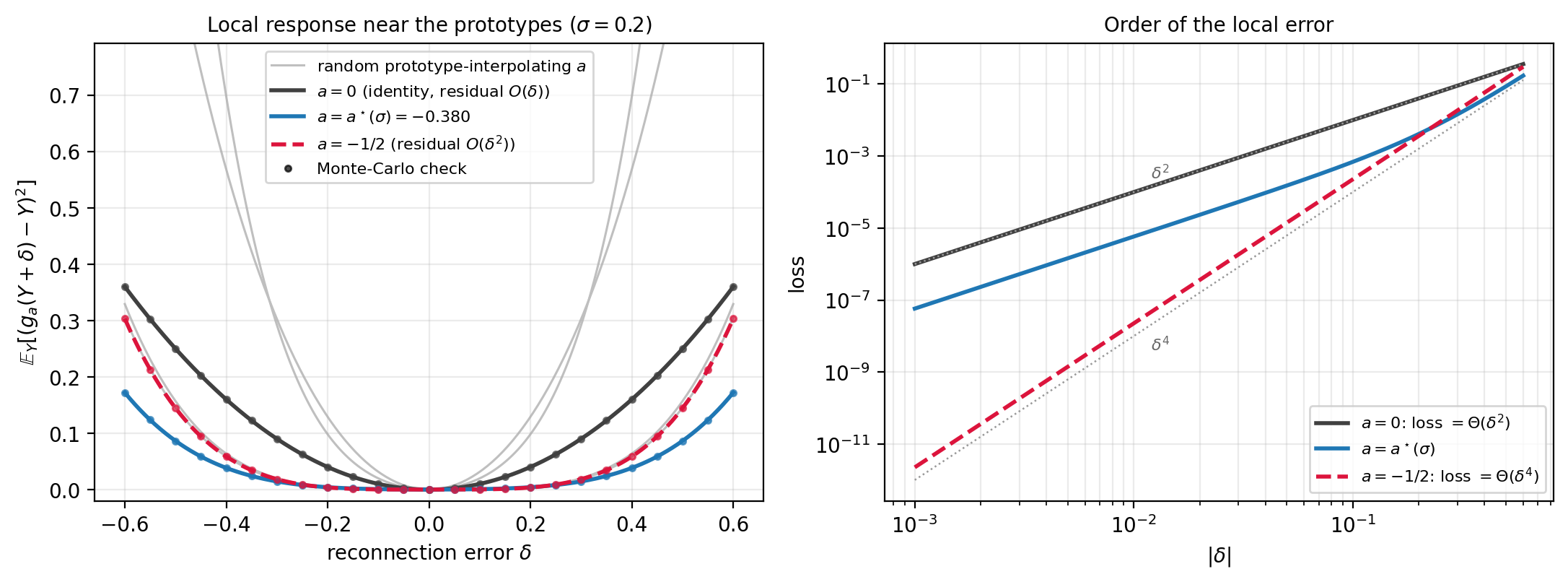}
  \caption{%
    Loss $\E_Y[(g_a(Y+\delta)-Y)^2]$ under a bounded deterministic
    reconnection error $\delta$, restricted to the neighborhood of the
    prototypes for which the local model is intended, at $\sigma=0.2$.  Grey
    curves are randomly drawn prototype-interpolating values of $a$, which
    illustrate that interpolation alone constrains nothing.
    \textbf{Right:} the same curves on log--log axes, showing the
    $\Theta(\delta^2)$ behavior of the identity head against the
    $\Theta(\delta^4)$ behavior of $g_{-1/2}$ predicted by
    \eqref{eq:toy-local-error}.}
  \label{fig:toy-perturbation}
\end{figure}

\end{document}